\newcommand\bcmdtab{\noindent\bgroup\tabcolsep=0pt%
  \begin{tabular}{@{}p{10pc}@{}p{20pc}@{}}}
\newcommand\ecmdtab{\end{tabular}\egroup}
  \title[Theory and Practice of Logic Programming]
        {Fuzzy Answer Set Computation via Satisfiability Modulo Theories}
  \author[M. Alviano and R. Pe\~naloza]
         {MARIO ALVIANO \\
         University of Calabria, Italy
         \and
         RAFAEL PE\~NALOZA \\
         Free University of Bozen-Bolzano, Italy}
\newtheorem{examplee}{Example}
\newenvironment{example}{\begin{examplee}}{\hfill $\blacksquare$\end{examplee}}
\newcommand{\naf}{\ensuremath{\raise.17ex\hbox{\ensuremath{\scriptstyle\mathtt{\sim}}}}\xspace} 
\def\T{\ensuremath{\mathcal{T}}\xspace}
\def\A{\ensuremath{\mathcal{A}}\xspace}
\def\R{\ensuremath{\mathbb{R}}\xspace}
\def\Q{\ensuremath{\mathbb{Q}}\xspace}
\def\heads{\ensuremath{\mathit{heads}}\xspace}
\def\At{\ensuremath{\mathit{At}}\xspace}
\def\SM{\ensuremath{\mathit{SM}}\xspace}
\def\G{\ensuremath{\mathcal{G}}\xspace}
\def\fuzzy{\ensuremath{\mathit{fuzzy}}\xspace}
\def\crisp{\ensuremath{\mathit{crisp}}\xspace}
\def\simp{\ensuremath{\mathit{simp}}\xspace}
\def\shift{\ensuremath{\mathit{shift}}\xspace}
\def\ite{\ensuremath{\mathit{ite}}\xspace}
\def\smt{\ensuremath{\mathit{smt}}\xspace}
\def\out{\ensuremath{\mathit{out}}\xspace}
\def\inn{\ensuremath{\mathit{inn}}\xspace}
\def\pos{\ensuremath{\mathit{pos}}\xspace}
\def\heads{\ensuremath{\mathit{heads}}\xspace}
\def\comp{\ensuremath{\mathit{comp}}\xspace}
\def\supp{\ensuremath{\mathit{supp}}\xspace}
\def\constraints{\ensuremath{\mathit{constraints}}\xspace}
\def\rank{\ensuremath{\mathit{rank}}\xspace}
\def\rcomp{\ensuremath{\mathit{rcomp}}\xspace}
\def\ocomp{\ensuremath{\mathit{ocomp}}\xspace}
\def\osupp{\ensuremath{\mathit{osupp}}\xspace}
\def\bool{\ensuremath{\mathit{bool}}\xspace}
\def\Godel{G\"odel\xspace}
\def\B{\ensuremath{\mathcal{B}}\xspace}
\newcommand{\Luka}{\L ukasiewicz\xspace}
\begin{document}

\label{firstpage}

\maketitle

\begin{abstract}
Fuzzy answer set programming (FASP) combines two declarative frameworks, answer set programming and fuzzy logic, in order to model reasoning by default over imprecise information.
Several connectives are available to combine different expressions; in particular the \Godel and \Luka fuzzy connectives
are usually considered, due to their properties.
Although the \Godel conjunction can be easily eliminated from rule heads, we show through complexity arguments 
that such a simplification is infeasible in general for all other connectives. 
The paper analyzes a translation of FASP programs into satisfiability modulo theories~(SMT), which in general produces quantified formulas because of the minimality of the semantics.
Structural properties of many FASP programs allow to eliminate the quantification, or to sensibly reduce the number of quantified variables.
Indeed, integrality constraints can replace recursive rules commonly used to force Boolean interpretations, and completion subformulas can guarantee minimality for acyclic programs with atomic heads.
Moreover, head cycle free rules can be replaced by shifted subprograms, whose structure depends on the eliminated head 
connective, so that ordered completion may replace the minimality check if also \Luka disjunction in rule bodies is acyclic.
The paper also presents and evaluates a prototype system implementing these translations.
\end{abstract}

\begin{keywords}
answer set programming, fuzzy logic, satisfiability modulo theories.
\end{keywords}


\section{Introduction}\label{sec:intro}

Answer set programming (ASP) \cite{DBLP:journals/ngc/GelfondL91,DBLP:journals/amai/Niemela99,mare-trus-99} is a declarative language for knowledge representation, particularly suitable to model common non-monotonic tasks  such as reasoning by default, abductive reasoning, and belief revision \cite{bara-2002,DBLP:conf/nmr/MarekR04,DBLP:journals/ai/LinY02,DBLP:conf/kr/DelgrandeSTW08}.
If on the one hand ASP makes logic closer to the real world allowing for reasoning on incomplete knowledge, on the other hand it is unable to model imprecise information that may arise from the intrinsic limits of sensors, or the vagueness of natural language.
Fuzzy answer set programming (FASP) \cite{DBLP:journals/amai/NieuwenborghCV07} overcomes this limitation by interpreting propositions with a truth degree in the real interval $[0,1]$.
Intuitively, the higher the degree assigned to a proposition, the \emph{more true} it is, with $0$ and $1$ denoting \emph{totally false} and \emph{totally true}, respectively.
The notion of fuzzy answer set, or fuzzy stable model, was recently extended to arbitrary propositional formulas \cite{DBLP:conf/jelia/LeeW14}.
\citeANP{DBLP:conf/jelia/LeeW14} also propose an example on modeling dynamic \emph{trust} in social networks, which inspired the following simplified scenario that clarifies how truth degrees increase the knowledge representation capability of ASP.

\begin{example}\label{ex:social}
A user of a social network may trust or distrust another user, and these are vague concepts that can be naturally modeled by truth degrees.
These degrees may change over time.
For example, if at some point $A$ has a conflict with $B$, it is likely that her distrust on $B$ will increase and her trust on $B$ will decrease.
These are non-monotonic concepts that can be naturally handled in FASP.
\end{example}

In practice, however, ASP offers many efficient solvers such as \textsc{dlv} \cite{DBLP:conf/datalog/AlvianoFLPPT10}, \textsc{cmodels} \cite{DBLP:conf/lpnmr/LierlerM04}, \textsc{clasp} \cite{DBLP:journals/ai/GebserKS12}, and \textsc{wasp} \cite{DBLP:conf/lpnmr/AlvianoDFLR13}, which is not the case for FASP.
A preliminary FASP solver for programs with atomic heads and \Luka conjunction, called \textsc{fasp}, was presented at 
ICLP'13 by \cite{DBLP:journals/tplp/AlvianoP13}.
It implements approximation operators and a translation into bilevel programming \cite{DBLP:journals/ijar/BlondeelSVC14}.
A more general solver, called \textsc{ffasp} 
\cite{DBLP:conf/ecai/MushthofaSC14},
is based on a translation into ASP for computing stable models whose truth degrees are in the set 
$\Q_k := \{i/k \mid i \in [0..k]\}$, for a fixed $k$.
In general, exponentially many $k$ must be tested for checking the existene of a stable model, which is infeasible in practice. Therefore, \textsc{ffasp} tests by default a limited set of values. 
Neither \textsc{fasp} nor \textsc{ffasp} accept nesting of negation, which would allow to encode \emph{choice rules}, a convenient way for guessing truth degrees without using auxiliary atoms \cite{DBLP:conf/jelia/LeeW14}.
Indeed, choice rules allow to check satisfiability of fuzzy propositional formulas without adding new atomic propositions.
Our aim is to provide a more flexible FASP solver supporting useful patterns like choice rules.

Satisfiability modulo theories (SMT) \cite{DBLP:series/faia/BarrettSST09} extends propositional logic with external background theories---e.g. real arithmetic \cite{DBLP:journals/tocl/Ratschan06,DBLP:journals/jar/AkbarpourP10}---for which specialized methods provide efficient decision procedures.
SMT is thus a good candidate as a target framework for computing fuzzy answer sets efficiently.
This is non-trivial because the minimality condition that fuzzy stable models must satisfy makes the problem hard for the second level of the polynomial hierarchy; indeed, the translation provided in Section~\ref{sec:translation} produces quantified theories in general.
However, structural properties of the program that decrease the complexity to NP can be taken into account in order to obtain more tailored translations.
For example, disabling head connectives and recursive definitions yields a compact translation into fuzzy propositional logic known as \emph{completion} \cite{DBLP:journals/tplp/JanssenVSC12}, which in turn can be expressed in SMT (see Section~\ref{sec:completion}).
Since completion is unsound for programs with recursive definitions, the notion of \emph{ordered completion} has arisen in the ASP literature \cite{DBLP:journals/amai/Ben-EliyahuD94,DBLP:conf/ecai/Janhunen04,DBLP:journals/amai/Niemela08,DBLP:journals/ai/AsuncionLZZ12}.
In a nutshell, stable models of ASP programs with atomic heads can be recasted in terms of program reducts and fixpoint of the immediate consequence operator, where the computation of the fixpoint defines a ranking of the derived atoms.
Fuzzy stable models of programs with atomic heads can also be defined in terms of reducts and fixpoint of the immediate consequence operator \cite{DBLP:journals/tplp/JanssenVSC12}, although the notion of ranking can be extended to FASP only when recursive \Luka disjunction is disabled.
Using these notions, ordered completion is defined for FASP programs in Section~\ref{sec:ordered}.

In ASP, completion and ordered completion are also applicable to disjunctive programs having at most one recursive atom in each rule head.
Such programs, referred to as \emph{head cycle free} (HCF) \cite{DBLP:journals/amai/Ben-EliyahuD94}, are usually translated into equivalent programs with atomic heads by a so-called \emph{shift} \cite{DBLP:journals/tocl/EiterFW07}.
The same translation also works for HCF FASP programs using \Luka disjunction in rule heads.
On the other hand, \Luka conjunction and \Godel disjunction require more advanced constructions (Section~\ref{sec:shift}) which introduce recursive \Luka disjunction in rule bodies to restrict auxiliary atoms to
be Boolean.
Such rules are handled by integrality constraints in the theory produced by the completion, while they inhibit the application of the ordered completion.
As in ASP, the shift is unsound in general for FASP programs with head cycles, and complexity arguments given in Section~\ref{sec:hard} prove that it is unlikely that head connectives other than \Godel conjunction can be eliminated in general.

The general translation into SMT, completion, and ordered completion are implemented in a new FASP solver called \textsc{fasp2smt} (\url{http://alviano.net/software/fasp2smt/}; see Section~\ref{sec:experiment}).
\textsc{fasp2smt} uses \textsc{gringo} \cite{DBLP:conf/lpnmr/GebserKKS11} to obtain a ground representation of the input program, and \textsc{z3} \cite{DBLP:conf/tacas/MouraB08} to solve SMT instances encoding ground programs.
Efficiency of \textsc{fasp2smt} is compared with the previously implemented solver \textsc{ffasp} \cite{DBLP:conf/ecai/MushthofaSC14}, showing strengths and weaknesses of the proposed approach.

\section{Background}\label{sec:background}

We briefly recall the syntax and semantics of FASP \cite{DBLP:journals/amai/NieuwenborghCV07,DBLP:conf/jelia/LeeW14} and SMT \cite{DBLP:series/faia/BarrettSST09}.
Only the notions needed for the paper are introduced; for example, we only consider real arithmetic for SMT.

\subsection{Fuzzy Answer Set Programming}

Let $\B$ be a fixed set of propositional atoms.
A \emph{fuzzy atom} (\emph{atom} for short) is either a propositional atom from $\B$, or a numeric constant in $[0,1]$.
\emph{Fuzzy expressions} are defined inductively as follows:
every atom is a fuzzy expression;
if $\alpha$ is a fuzzy expression then $\naf \alpha$ is a fuzzy expression, where \naf denotes \emph{negation as failure};
if $\alpha$ and $\beta$ are fuzzy expressions, and $\odot \in \{\otimes,\oplus,\veebar,\barwedge\}$ is a connective, 
$\alpha \odot \beta$ is a fuzzy expression. 
Connectives $\otimes,\oplus$ are known as the \Luka connectives, and $\veebar,\barwedge$ are the \Godel
connectives.
A \emph{head expression} is a fuzzy expression of the form $p_1 \odot \cdots \odot p_n$, where $n \geq 1$, $p_1,\ldots,p_n$ 
are atoms, and $\odot \in \{\otimes,\oplus,\veebar,\barwedge\}$.
A \emph{rule} is of the form $\alpha \leftarrow \beta$, where $\alpha$ is a head expression, and $\beta$ is a fuzzy expression.
A \emph{FASP program} $\Pi$ is a finite set of rules.
Let $\At(\Pi)$ denote the set of atoms used by $\Pi$.

A \emph{fuzzy interpretation} $I$ for a FASP program $\Pi$ is a function $I : \B \rightarrow [0,1]$ mapping 
each propositional atom of $\B$ into a truth degree in $[0,1]$.
$I$ is extended to fuzzy expressions as follows:
$I(c) = c$ for $c \in [0,1]$;
$I(\naf \alpha) = 1 - I(\alpha)$;
$I(\alpha \otimes \beta) = \max\{I(\alpha) + I(\beta) - 1, 0\}$;
$I(\alpha \oplus \beta) = \min\{I(\alpha) + I(\beta), 1\}$;
$I(\alpha \veebar \beta) = \max\{I(\alpha), I(\beta)\}$; and
$I(\alpha \barwedge \beta) = \min\{I(\alpha), I(\beta)\}$.
%
%
$I$ satisfies a rule $\alpha \leftarrow \beta$ ($I \models \alpha \leftarrow \beta$) if 
$I(\alpha) \geq I(\beta)$; $I$ is a model of a FASP program $\Pi$, denoted $I \models \Pi$, if $I \models r$ for each $r \in \Pi$. 
$I$ is a \emph{stable model} of the FASP program $\Pi$ if $I \models \Pi$ and there is no interpretation $J$ such that 
$J \subset I$ and $J \models \Pi^I$, where the \emph{reduct} $\Pi^I$ is obtained from $\Pi$ by replacing each occurrence of a 
fuzzy expression $\naf \alpha$ by the constant $1-I(\alpha)$.
Let $\SM(\Pi)$ denote the set of stable models of $\Pi$.
A program $\Pi$ is \emph{coherent} if $\SM(\Pi) \neq \emptyset$; otherwise, $\Pi$ is \emph{incoherent}.
Two programs $\Pi,\Pi'$ are equivalent w.r.t.\ a crisp set $S \subseteq \B$, denoted $\Pi \equiv_S \Pi'$, if $|\SM(\Pi)| = |\SM(\Pi')|$ and $\{I \cap S \mid I \in \SM(\Pi)\} = \{I \cap S \mid I \in \SM(\Pi')\}$, where $I \cap S$ is the interpretation assigning $I(p)$ to all $p \in S$, and 0 to all $p \notin S$.

\begin{example}\label{ex:social:encoding}
Consider the scenario described in Example~\ref{ex:social}.
Let $U$ be a set of users, and $[0..T]$ the timepoints of interest, for some $T \geq 1$.
Let $\mathit{trust}(x,y,t)$ be a propositional atom expressing that $x \in U$ trusts $y \in U$ at time $t \in [0..T]$.
Similarly, $\mathit{distrust}(x,y,t)$ represents that $x$ distrusts $y$ at time $t$, and $\mathit{conflict}(x,y,t)$ encodes that $x$ has a conflict with $y$ at time $t$.
The social network example can be encoded by the FASP program $\Pi_1$ containing the following rules, for all $x \in U$, $y \in U$, and $t \in [0..T-1]$:
\[
\begin{array}{rcl}
    \mathit{distrust}(x,y,t+1) & \!\!\!\leftarrow\!\!\! & \mathit{distrust}(x,y,t) \oplus \mathit{conflict}(x,y,t) \\
    \mathit{trust}(x,y,t+1) & \!\!\!\leftarrow\!\!\! & \mathit{trust}(x,y,t) \otimes \naf(\mathit{distrust}(x,y,t+1) \otimes \naf\mathit{distrust}(x,y,t))
\end{array}
\]
The second rule above states that the trust degree of $x$ on $y$ decreases when her distrust degree on $y$ increases.
A stable model $I$ of $\Pi_1 \cup \{\mathit{trust}(\mathit{Alice},\mathit{Bob},0) \leftarrow 0.8$, $\mathit{conflict}(\mathit{Alice},\mathit{Bob},1) \leftarrow 0.2\}$ is such that $I(\mathit{distrust}(\mathit{Alice},\mathit{Bob},2)) = 0.2$, and $I(\mathit{trust}(\mathit{Alice},\mathit{Bob},2)) = 0.6$.
\end{example}

ASP programs are FASP programs such that all head connectives are $\veebar$, all body connectives are $\barwedge$, and all numeric constants are $0$ or $1$.
Moreover, an ASP program $\Pi$ implicitly contains \emph{crispifying} rules of the form $p \leftarrow p \oplus p$, for all $p \in \At(\Pi)$.
In ASP programs, $\veebar$ and $\barwedge$ are usually denoted $\vee$ and $\wedge$, respectively.

\subsection{Satisfiability Modulo Theories}

Let $\Sigma = \Sigma^V \cup \Sigma^C \cup \Sigma^F \cup \Sigma^P$ be a \emph{signature} where $\Sigma^V$ is a set of 
\emph{variables}, $\Sigma^C$ is a set of \emph{constant} symbols, $\Sigma^F$ is the set of binary \emph{function} symbols $\{+,-\}$, and $\Sigma^P$ is the set of binary \emph{predicate} symbols $\{<,\leq,\geq,>,=,\neq\}$.
\emph{Terms} and \emph{formulas} over $\Sigma$ are defined inductively, where we use infix notation for all binary
symbols.
Constants and variables are terms.
If $t_1,t_2$ are terms and $\odot \in \Sigma^F$ then $t_1 \odot t_2$ is a term.
If $t_1,t_2$ are terms and $\odot \in \Sigma^P$ then $t_1 \odot t_2$ is a formula.
If $\varphi$ is a formula and $t_1,t_2$ are terms then $\ite(\varphi,t_1,t_2)$ is a term (\ite stands for \emph{if-then-else}).
If $\varphi_1,\varphi_2$ are formulas and $\odot \in \{\vee, \wedge, \rightarrow, \leftrightarrow\}$ then $\varphi_1 \odot \varphi_2$ is a formula.
If $x$ is a variable and $\varphi$ is a formula then $\forall x.\varphi$ is a formula.
We consider only closed formulas, i.e., formulas in which all free variables are universally quantified.
For a term $t$ and integers $a,b$ with $a < b$, we use $t \in [a..b]$ in formulas to represent the subformula $\bigvee_{i = a}^b t = i$.
Similarly, for terms $t,t_1,t_2$, $t \in [t_1,t_2]$ represents $t_1 \leq t \wedge t \leq t_2$.
A $\Sigma$-theory $\Gamma$ is a set of $\Sigma$-formulas.

A $\Sigma$-structure \A is a pair $(\R,\cdot^\A)$, where $\cdot^\A$ is a mapping such that
$p^\A \in \R$ for each constant symbol $p$,
$(c)^\A = c$ for each number $c$,
$\odot^\A$ is the binary function $\odot$ over reals if $\odot \in \Sigma^F$, and the binary relation $\odot$ over reals if $\odot \in \Sigma^P$.
Composed terms and formulas are interpreted as follows:
for $\odot \in \Sigma^F$, $(t_1 \odot t_2)^\A = t_1^\A \odot t_2^\A$;
$\ite(\varphi,t_1,t_2)^\A$ equals $t_1^\A$ if $\varphi^\A$ is true, and $t_2^\A$ otherwise;
for $\odot \in \Sigma^P$, $(t_1 \odot t_2)^\A$ is true if and only if $t_1^\A \odot t_2^\A$;
for $\odot \in \{\vee,\wedge,\rightarrow,\leftrightarrow\}$, $(\varphi_1 \odot \varphi_2)^\A$ equals $\varphi_1^\A \odot \varphi_2^\A$ (in propositional logic);
$(\forall x.\varphi)^\A$ is true if and only if $\varphi[x/n]$ is true for all $n \in \R$, where $\varphi[x/n]$ is the formula obtained by substituting $x$ with $n$ in $\varphi$.
\A is a $\Sigma$-model of a theory $\Gamma$, denoted $\A \models \Gamma$, if $\varphi^\A$ is true for all $\varphi \in \Gamma$.

\begin{example}
Let $\Sigma^C$ be $\{p,q,s,z\}$, $x$ be a variable, and
$\Gamma_1=\{z \in [0,1], \forall x.(x \geq z)\}$ be a $\Sigma$-theory.
Any $\Sigma$-model of $\Gamma_1$ maps $z$ to 0.
If $\ite(p + q \leq 1, p + q, 1) \geq s \leftrightarrow (p \geq \ite(s - q \geq 0, s - q, 0) \wedge q \geq \ite(s - p \geq 0, s - p, 0))$ is added to $\Gamma_1$,
then any $\Sigma$-model of $\Gamma_1$ maps $z$ to 0, and $p,q,s$ to real numbers in the interval $[0,1]$.
\end{example}

\section{Structure Simplification}\label{sec:simplification}

The structure of FASP programs can be simplified through rewritings that leave at most one connective in each rule body \cite{DBLP:conf/ecai/MushthofaSC14}.
Essentially, a rule of the form $\alpha \leftarrow \beta \odot \gamma$, with $\odot \in \{\otimes,\oplus,\veebar,\barwedge\}$, is replaced by the rules
$\alpha \leftarrow p \odot q$, $p \leftarrow \beta$, and $q \leftarrow \gamma$, with $p$ and $q$ fresh atoms.
A further simplification, implicit in the translation into crisp ASP by \cite{DBLP:conf/ecai/MushthofaSC14}, eliminates $\barwedge$ in rule heads and $\veebar$ in rule bodies:
a rule of the form $p_1 \barwedge \cdots \barwedge p_n \leftarrow \beta$, $n \geq 2$, is equivalently replaced by 
$n$ rules $p_i \leftarrow \beta$, for $i \in [1..n]$;
and a rule of the form $\alpha \leftarrow \beta \veebar \gamma$ is replaced by $\alpha \leftarrow \beta$, $\alpha \leftarrow \gamma$.
Moreover, a rule of the form $\alpha \leftarrow \naf \beta$ can be equivalently replaced by the rules $\alpha \leftarrow \naf p$ and $p \leftarrow \beta$, where $p$ is a fresh atom.
Let $\simp(\Pi)$ be the program obtained from $\Pi$ by applying these substitutions. 
\begin{restatable}{proposition}{PropSimp}\label{prop:simp}
For every FASP program $\Pi$, 
it holds that $\Pi \equiv_{\At(\Pi)} \simp(\Pi)$,
i.e., $|\SM(\Pi)| = |\SM(\simp(\Pi))|$ and $\{I \cap \At(\Pi) \mid I \in \SM(\Pi)\} = \{I \cap \At(\Pi) \mid I \in \SM(\simp(\Pi))\}$.
\end{restatable}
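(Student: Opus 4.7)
The plan is to induct on the number of atomic rewriting steps that $\simp$ applies, reducing the statement to the following single-step claim: whenever $\Pi'$ is obtained from $\Pi$ by one elementary rewrite, $\Pi \equiv_{\At(\Pi)} \Pi'$. Since the simplification consists of four elementary moves — body splitting $\alpha \leftarrow \beta \odot \gamma \rightsquigarrow \{\alpha \leftarrow p \odot q,\ p \leftarrow \beta,\ q \leftarrow \gamma\}$, negation lifting $\alpha \leftarrow \naf \beta \rightsquigarrow \{\alpha \leftarrow \naf p,\ p \leftarrow \beta\}$, the $\barwedge$-head split, and the $\veebar$-body split — it suffices to treat each move individually and compose the resulting bijections along the rewriting chain, since each move either preserves $\At(\Pi)$ or only enlarges the set of atoms by fresh symbols.

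For the two moves that introduce fresh atoms $p$ (and possibly $q$), I would exhibit the bijection $\SM(\Pi) \leftrightarrow \SM(\Pi')$ explicitly. In the forward direction, extend $I \in \SM(\Pi)$ to $I'$ by setting $I'(p) := I(\beta)$ and, in the body-splitting case, $I'(q) := I(\gamma)$. In the backward direction, restrict via $I' \cap \At(\Pi)$. The model check is immediate using $I'(p \odot q) = I(\beta) \odot I(\gamma) = I(\beta \odot \gamma)$ and $I'(\naf p) = 1 - I'(p) = 1 - I(\beta) = I(\naf \beta)$. For stability, I would argue contrapositively: any $J' \subset I'$ refuting stability of $I'$ projects to $J := J' \cap \At(\Pi)$, which refutes stability of $I$, invoking monotonicity of the four connectives to transfer the chain $J'(\alpha) \geq J'(p) \odot J'(q) \geq J'(\beta^{I'}) \odot J'(\gamma^{I'})$ into the original reduct. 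Conversely, any $J \subset I$ refuting stability of $I$ is lifted to $J'$ by $J'(p) := J(\beta)$ and $J'(q) := J(\gamma)$, and $J' \subset I'$ with $J' \models (\Pi')^{I'}$ is then direct.

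For the $\barwedge$-head move, the satisfaction semantics of the single rule $p_1 \barwedge \cdots \barwedge p_n \leftarrow \beta$ and of the $n$ replacement rules $p_i \leftarrow \beta$ coincide pointwise on every interpretation, because $\min_i I(p_i) \geq I(\beta)$ is literally equivalent to the conjunction of the inequalities $I(p_i) \geq I(\beta)$. The $\veebar$-body move is dual, using $I(\beta \veebar \gamma) = \max\{I(\beta), I(\gamma)\}$. Both moves leave $\At(\Pi)$ untouched and produce reducts that match the original verbatim, so stability transfers with no further work and the bijection between $\SM(\Pi)$ and $\SM(\Pi')$ is the identity.

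The main obstacle I expect is the negation-lifting move, where one must argue that the fresh atom $p$ is forced to take precisely the value $I(\beta)$ in every stable model of $\Pi'$, so that the constants introduced by the two reducts — $1 - I'(p)$ in $(\Pi')^{I'}$ and $1 - I(\beta)$ in $\Pi^I$ — agree. This pinning rests on the observation that $p$ appears in $\Pi'$ only as the head of $p \leftarrow \beta$ and beneath $\naf$ elsewhere, so minimality forces $I'(p)$ down to the fixpoint value $I(\beta)$, while any attempt to lower $p$ further immediately violates $p \leftarrow \beta$ in the reduct. Making this pinning precise, and verifying that it survives composition with the other three rewrites (in particular when $\beta$ itself has previously been decomposed), is the technical crux of the proof.
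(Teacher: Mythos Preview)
Your approach is essentially the same as the paper's: both proceed rule by rule (the paper phrases it as structural induction on a single rule $r$, showing $\Pi \equiv_{\At(\Pi)} (\Pi\setminus\{r\})\cup\simp(\{r\})$), treat each rewrite type separately, build the obvious bijection by setting $I'(p):=I(\beta)$ on fresh atoms, and rely on the pinning observation that the only head occurrence of a fresh $p$ is $p\leftarrow\beta$. The paper in fact gives fewer details than you do---it spells out only the negation-lifting case and defers body splitting and the $\barwedge$/$\veebar$ moves to \cite{DBLP:conf/ecai/MushthofaSC14}---so your sketch is already at least as complete as the published proof.
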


\citeANP{DBLP:conf/ecai/MushthofaSC14} also simplify rule heads:
$\alpha \odot \beta \leftarrow \gamma$ is replaced by 
$p \odot q \leftarrow \gamma$, $p \leftarrow \alpha$, $\alpha \leftarrow p$, $q \leftarrow \beta$, and $\beta \leftarrow q$, where $p$ and $q$ are fresh atoms.
We do not apply these rewritings as they may inhibit other simplifications introduced in Section~\ref{sec:shift}.

\subsection{Hardness results}\label{sec:hard}

A relevant question is whether more rule connectives can be eliminated in order to further simplify  the structure of FASP programs.
We show that this is not possible, unless the polynomial hierarchy collapses, by adapting the usual reduction of 
2-QBF$_\exists$ satisfiability to ASP coherence testing \cite{DBLP:journals/amai/EiterG95}:
for $n > m \geq 1$, $k \geq 1$ and formula 
$\phi := \exists x_1,\ldots,x_m \forall x_{m+1},\ldots,x_n\ \bigvee_{i=1}^k L_{i,1} \wedge L_{i,2} \wedge L_{i,3}$, 
test the coherence of $\Pi_\phi$ below
\begin{eqnarray}
    x_i^T \vee x_i^F \leftarrow 1 && \forall i \in [1..n] \label{eq:hard:1} \\
    x_i^T \leftarrow \mathit{sat} \quad x_i^F \leftarrow \mathit{sat} \quad 0 \leftarrow \naf \mathit{sat} && \forall i \in [m+1..n] \label{eq:hard:3} \\
    \mathit{sat} \leftarrow \sigma(L_{i,1}) \wedge \sigma(L_{i,2}) \wedge \sigma(L_{i,3}) && \forall i \in [1..k] \label{eq:hard:4}
\end{eqnarray}
where $\sigma(x_i) := x_i^T$, and $\sigma(\neg x_i) := x_i^F$, for all $i \in [1..n]$.
$\Sigma^P_2$-hardness for FASP programs with $\veebar$ in rule heads is proved by defining a FASP program $\Pi_\phi^\veebar$ comprising (\ref{eq:hard:1})--(\ref{eq:hard:4}) (recall that $\vee$ is $\veebar$, and $\wedge$ is $\barwedge$).
This also holds if we replace $\wedge$ with $\otimes$ in (\ref{eq:hard:4}).
Another possibility is to replace $\vee$ with $\oplus$ in (\ref{eq:hard:1}), and add $p \leftarrow p \oplus p$ for all atoms in $\At(\Pi_\phi)$, showing $\Sigma^P_2$-hardness for FASP programs with $\oplus$ in rule heads, a result already proved by \citeN{DBLP:journals/ijar/BlondeelSVC14} with a different construction.

The same result also applies to $\otimes$, but we need a more involved argument.
Let $\Pi_\phi^\otimes$ be the program obtained from $\Pi_\phi$ by replacing $\wedge$ with $\otimes$, substituting the rule (\ref{eq:hard:1}) with the following three rules for each $i \in [1..n]$:
\begin{eqnarray*}
    x_i^T \otimes x_i^F \leftarrow 0.5 \quad &
    x_i^T \otimes x_i^T \otimes x_i^T \leftarrow x_i^T \otimes x_i^T \quad &
    x_i^F \otimes x_i^F \otimes x_i^F \leftarrow x_i^F \otimes x_i^F
\end{eqnarray*}
For all interpretations $I$, the first rule enforces $I(x_i^T) + I(x_i^F) \geq 1.5$.
The second rule enforces $3 \cdot I(x_i^T) - 2 \geq 2 \cdot I(x_i^T) - 1$ whenever $2 \cdot I(x_i^T) - 1 > 0$, i.e., $I(x_i^T) \geq 1$ whenever $I(x_i^T) > 0.5$.
Similarly, the third rule enforces $I(x_i^F) \geq 1$ whenever $I(x_i^F) > 0.5$.
Hence, one of $x_i^T,x_i^F$ is assigned 1, and the other 0.5.
Since conjunctions are modeled by $\otimes$, and each conjunction contains three literals whose interpretation is either 0.5 or 1, it follows that the interpretation of the conjunction is 1 if all literals are 1, and at most 0.5 otherwise.
Hence, $\phi$ is satisfiable if and only if $\Pi_\phi^\otimes$ is coherent.

%

\begin{restatable}{theorem}{ThmHard}\label{thm:hard}
Checking coherence of FASP programs is $\Sigma^P_2$-hard already in the following cases:
(i) all connectives are $\otimes$;
(ii) head connectives are $\veebar$, and body connectives are $\barwedge$ (or $\otimes$); and
(iii) head connectives are $\oplus$, and body connectives are $\barwedge$ (or $\otimes$) and $\oplus$.
\end{restatable}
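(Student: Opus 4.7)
The plan is to give polynomial-time reductions from 2-QBF$_\exists$ satisfiability to FASP coherence, one per fragment, all built around the schema $\Pi_\phi$ of rules (\ref{eq:hard:1})--(\ref{eq:hard:4}). For each case I need to (a) specify a concrete instance whose connectives fit inside the allowed fragment, (b) show that the guess atoms are pinned to a two-element set from which a Boolean assignment can be read off, and (c) lift the classical saturation argument of \cite{DBLP:journals/amai/EiterG95} to the fuzzy setting so that stable-model minimality aligns with the $\exists \forall$ quantifier alternation.

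Case (ii) is the easiest. Using $\Pi_\phi^{\veebar}$, minimality of the stable-model semantics on the disjunctive guesses (\ref{eq:hard:1}) forces each pair $(I(x_i^T), I(x_i^F))$ to $(1,0)$ or $(0,1)$, so every stable interpretation is Boolean and the classical ASP argument transfers verbatim. For the body variant where $\barwedge$ is replaced by $\otimes$, it suffices to observe that on Boolean values $a \otimes b = \max\{a+b-1, 0\}$ agrees with $a \barwedge b = \min\{a,b\}$, leaving the set of stable models unchanged. For case (iii) I replace $\vee$ by $\oplus$ in (\ref{eq:hard:1}) and conjoin crispifying rules $p \leftarrow p \oplus p$ for every $p \in \At(\Pi_\phi)$; a short case split on the value of $I(p)$ shows that any model of these rules assigns every atom a value in $\{0,1\}$, so once again one reduces to the Boolean scenario.

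Case (i) is the main obstacle. With $\Pi_\phi^\otimes$ as proposed in the excerpt, I would first verify that the three-rule gadget replacing (\ref{eq:hard:1}) pins each pair $(I(x_i^T), I(x_i^F))$ to $\{(1, 0.5), (0.5, 1)\}$: the constant rule delivers $I(x_i^T) + I(x_i^F) \geq 3/2$, each self-amplifying rule supplies a threshold argument forcing any value strictly above $1/2$ up to $1$, and stable-model minimality rules out the joint assignment $(1,1)$ by exhibiting a smaller model in which one of the two atoms is dropped to $0.5$. Then I would check that the three-way \Luka conjunction over values in $\{0.5, 1\}$ equals $1$ iff all three inputs equal $1$ and is at most $0.5$ otherwise, so the body of rule (\ref{eq:hard:4}) makes $\mathit{sat}$ reach value $1$ exactly when the matrix of $\phi$ is satisfied by the Boolean projection encoded by $I$.

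The final step, shared by all cases, is the saturation argument. The constraint $0 \leftarrow \naf \mathit{sat}$ forces $I(\mathit{sat}) = 1$ in any stable model, and the rules (\ref{eq:hard:3}) then propagate this value to the $\forall$-atoms $x_i^T, x_i^F$ for $i \in [m+1..n]$; relative to the fuzzy order on interpretations, minimality thus selects precisely those $\exists$-assignments whose every $\forall$-extension satisfies the matrix. The subtlest point I expect in case (i) is verifying that no intermediate fuzzy values sneak in as alternative minimal models, since the gadget only directly constrains the guess atoms; this should be dispatched by observing that $\mathit{sat}$, the saturation atoms, and the clause subterms are already pinned to $\{0, 0.5, 1\}$, so no additional fuzzy flexibility remains.
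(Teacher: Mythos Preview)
Your proposal is correct and follows essentially the same route as the paper: the same three reductions from 2-QBF$_\exists$, the same gadgets (including the $\otimes$-only construction with threshold value $0.5$), the same crispifying rules for case~(iii), and the same saturation argument lifted from \cite{DBLP:journals/amai/EiterG95}. One small imprecision worth tightening: your claim that the gadget ``pins each pair $(I(x_i^T),I(x_i^F))$ to $\{(1,d),(d,1)\}$'' is only true for the existential indices $i\in[1..m]$ in a stable model---the universal atoms are saturated to $(1,1)$ via rules~(\ref{eq:hard:3}), as you correctly note later---so the two-valued restriction really applies to the \emph{countermodels} of the reduct, which is where the correspondence with $\forall$-assignments is established.
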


\subsection{Shifting heads}\label{sec:shift}

Theorem~\ref{thm:hard} shows that $\oplus$, $\otimes$, and $\veebar$ cannot be eliminated from rule heads in general by a polytime translation, unless the polynomial hierarchy collapses.
This situation is similar to the case of disjunctions in ASP programs, which cannot be eliminated 
either. 
However, \emph{head cycle free} (HCF) programs admit
a translation known as \emph{shift} that eliminates $\vee$ preserving stable models \cite{DBLP:journals/tocl/EiterFW07}.
We extend this idea to FASP connectives.
The definition of HCF programs relies on the notion of \emph{dependency graph}.
Let $\pos(\alpha)$ denote the set of propositional atoms occurring in $\alpha$ but not under the scope of any $\naf$ symbol.
The dependency graph $\G_\Pi$ of a FASP program $\Pi$ has vertices $\At(\Pi)$, and an arc $(p,q)$ if there is a rule $\alpha \leftarrow \beta \in \Pi$ such that $p \in \pos(\alpha)$, and $q \in \pos(\beta)$.
A \emph{(strongly connected) component} of $\Pi$ is a maximal set containing pairwise reachable vertices of $\G_\Pi$.
A program $\Pi$ is \emph{acyclic} if $\G_\Pi$ is acyclic;
$\Pi$ is HCF if there is no rule $\alpha \leftarrow \beta$ where $\alpha$ contains two atoms from the same component of $\Pi$;
$\Pi$ has non-recursive $\odot \in \{\otimes,\oplus,\veebar,\barwedge\}$ in rule bodies if whenever $\odot$ occurs in the body of a rule $r$ of $\simp(\Pi)$ but not under the scope of a \naf symbol then for all $p \in H(r)$ and for all $q \in \pos(B(r))$ atoms $p$ and $q$ belong to different components of $\simp(\Pi)$.

\begin{example}
The program $\{p \leftarrow q \oplus \naf\naf p\}$ is acyclic.
Note that $\naf\naf p$ does not provide an arc to the dependency graph.
Adding the rule $q \otimes s \leftarrow p$ makes the program cyclic but still HCF because $q$ and $s$ belong
to two different components. 
If also $q \leftarrow s$ is added, then the program is no more HCF.
Finally, note that $\Pi_1$ in Example~\ref{ex:social:encoding} is acyclic.
\end{example}

It should now be clear why we decided not to reduce the number of head connectives in the
translation \simp defined at the beginning of this section. By removing a connective in the head of a rule
of an HCF program, we might produce a program that is not HCF.
Consider for example the HCF program $\{p \otimes q \otimes s \leftarrow 1\}$.
To reduce one of the occurrences of $\otimes$, we can introduce a fresh atom $\mathit{aux}$ that
stands for $q\otimes s$. 
However, $q$ and $s$ would belong to the same component of the resulting program
$\{p \otimes \mathit{aux} \leftarrow 1,$ $q \otimes s \leftarrow \mathit{aux},$ $\mathit{aux} \leftarrow q \otimes s\}$.

We now define the \emph{shift} of a rule for all types of head connectives.
The essential idea is to move all head atoms but one to the body (hence the name
shift).
To preserve stable models, this has to be repeated for all head atoms, and some additional conditions might be required.
For a rule of the form $p_1 \oplus \cdots \oplus p_n \leftarrow \beta$, the shift essentially mimics the original 
notion for ASP programs, and produces
\begin{equation}\label{eq:shift:oplus}
    p_i \leftarrow \beta \otimes \naf p_1 \otimes \cdots \otimes \naf p_{i-1} \otimes \naf p_{i+1} \otimes \cdots \otimes \naf p_n
\end{equation}
for all $i \in [1..n]$.
Intuitively, the original rule requires any model $I$ to satisfy the condition
$I(p_1) + \cdots + I(p_n) \geq I(\beta)$. This is the case if and only if 
$$I(p_i) \geq I(\beta) + \sum_{j \in [1..n], j \neq i} (1 - I(p_j)) - (n-1) = I(\beta) - \sum_{j \in [1..n], j \neq i} I(p_j);$$ 
i.e., if and only if \eqref{eq:shift:oplus} is satisfied, for all $i \in [1..n]$.
The shift of rules with other connectives in the head is more elaborate.
For $p_1 \otimes \cdots \otimes p_n \leftarrow \beta$, it produces
\begin{eqnarray}\label{eq:shift:otimes}
    p_i \leftarrow q \otimes (\beta \oplus \naf p_1 \oplus \cdots \oplus \naf p_{i-1} \oplus \naf p_{i+1} \oplus \cdots \oplus \naf p_n) & q \leftarrow \beta & q \leftarrow q \oplus q \quad
\end{eqnarray}
for all $i \in [1..n]$, where $q$ is a fresh atom.
The last two rules enforce $I(q) = 1$ whenever $I(\beta) > 0$, and $I(q) = 0$ otherwise.
For all $i \in [1..n]$, $I(q) = 0$ implies that the body of the first rule is interpreted as 0, and $I(q) = 1$ implies  $I(q \otimes \gamma) = I(\gamma)$, where $\gamma$ is $\beta \oplus \naf p_1 \oplus \cdots \oplus \naf p_{i-1} \oplus \naf p_{i+1} \oplus \cdots \oplus \naf p_n$.
Since the original rule is associated with the satisfaction of $\sum_{i \in [1..n]} I(p_i) - (n-1) \geq I(\beta)$, which is the case if and only if $I(p_i) \geq I(\beta) + \sum_{j \in [1..n], j \neq i} (1 - I(p_j))$, for all $i \in [1..n]$, this
translation preserves stable models for HCF programs.

The shift of $p_1 \veebar \cdots \veebar p_n \leftarrow \beta$ requires an even more advanced construction.
Notice first that since the program is HCF, we can order head atoms such that for every 
$1\le i<j\le n$, $p_i$ does not reach $p_j$ in $\G_\Pi$. Assume w.l.o.g. that one such ordering is given.
Then, the shift of this rule is the program containing the rules
\begin{eqnarray}
    p_i \leftarrow \beta \barwedge \naf q_1 \barwedge \cdots \barwedge \naf q_{i-1} \barwedge q_i \label{eq:veebar:1}\\
    q_i \leftarrow (p_i \veebar \cdots \veebar p_n) \otimes \naf(p_{i+1} \veebar \cdots \veebar p_n) & q_i \leftarrow q_i \oplus q_i & q_n \leftarrow 1 \label{eq:veebar:2}
\end{eqnarray}
for all $i \in [1..n]$, where each $q_i$ is a fresh atom.
Intuitively, (\ref{eq:veebar:2}) enforces $I(q_i) = 1$ whenever $I(p_i) > \max\{I(p_{i+1}), \ldots, I(p_n)\}$, and $I(q_i) = 0$ otherwise, with the exception of $I(q_n)$ which is always 1.
The rule (\ref{eq:veebar:1}) enforces that $I(p_i) \geq I(\beta)$ whenever $I(p_i) \geq \max\{I(p_1), \ldots, I(p_{i-1})\}$, and either $I(p_i) > \max\{I(p_{i+1}), \ldots, I(p_n)\}$ or $i = n$.
In the following, let $\shift(\Pi)$ denote the program obtained by shifting all rules of $\Pi$.

\begin{restatable}{theorem}{ThmShift}\label{thm:shift}
Let $\Pi$ be FASP program.
If $\Pi$ is HCF then $\Pi \equiv_{\At(\Pi)} \shift(\Pi)$.
\end{restatable}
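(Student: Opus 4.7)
My plan is to establish an explicit bijection $I\mapsto I'$ between $\SM(\Pi)$ and $\SM(\shift(\Pi))$ that is the identity on $\At(\Pi)$. By Proposition~\ref{prop:simp} we may assume $\Pi=\simp(\Pi)$, so every non-atomic head uses exactly one of $\oplus$, $\otimes$, $\veebar$, and $\shift$ treats each rule locally. Given $I\in\SM(\Pi)$, I set the auxiliary atoms introduced by each rule in the canonical way dictated by the shift construction: for $\otimes$-rules, $I'(q)=1$ iff $I(\beta)>0$; for $\veebar$-rules, $I'(q_i)=1$ iff $i=n$ or $I(p_i)>\max\{I(p_{i+1}),\ldots,I(p_n)\}$, using the HCF ordering in which $p_i$ does not reach $p_j$ for $i<j$.

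The forward satisfaction is rule-local and amounts to the arithmetic identities noted after each display in Section~\ref{sec:shift}. For $\oplus$, the original constraint $\sum_i I(p_i)\ge I(\beta)$ is equivalent to the family $I(p_i)\ge I(\beta)-\sum_{j\ne i}I(p_j)$ (clipped to $[0,1]$) required by~\eqref{eq:shift:oplus}. For $\otimes$, once $q$ is pinned by the integrality pair $q\leftarrow\beta$ and $q\leftarrow q\oplus q$, the remaining rule of~\eqref{eq:shift:otimes} collapses to the $\otimes$-analogue $I(p_i)\ge I(\beta)+\sum_{j\ne i}(1-I(p_j))$. For $\veebar$, the auxiliaries $q_i$ select at most one head atom, and~\eqref{eq:veebar:1} asserts $I(p_i)\ge I(\beta)$ precisely when $p_i$ is that selected atom and dominates all earlier head atoms.

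The hard part is minimality. Suppose $J\subsetneq I'$ models $\shift(\Pi)^{I'}$; I want to derive a proper submodel of $I$ on $\At(\Pi)$ that satisfies $\Pi^{I}$, contradicting stability of $I$. The essential use of HCF is that the head atoms of any single rule lie in pairwise distinct SCCs, so a local decrease at one head atom cannot be forced by positive recursion to be compensated by a decrease at another head atom of the same rule; this lets me restrict $J$ to $\At(\Pi)$ and use the shifted rule to reconstruct a witness for the original rule in the reduct. For $\otimes$ I first note that $q\leftarrow q\oplus q$ survives the reduct verbatim (it contains no $\naf$), so $J(q)\in\{0,1\}$ is forced, and $J(q)=I'(q)$ follows by minimality on $q$. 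The most delicate point is the $\veebar$ case, where I will proceed by downward induction on the HCF ordering $i=n,n-1,\ldots,1$ to show $J(q_i)=I'(q_i)$, after which the analysis reduces to the $\oplus$ pattern. The converse direction is symmetric, and the uniqueness of the auxiliary values yields the cardinality equality $|\SM(\Pi)|=|\SM(\shift(\Pi))|$.
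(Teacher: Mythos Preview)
Your outline is sound up to the point where you address minimality, but the core step---``restrict $J$ to $\At(\Pi)$ and use the shifted rule to reconstruct a witness for the original rule in the reduct''---does not go through, and this is precisely where the nontrivial use of HCF lies. Consider the $\oplus$ case. In the reduct $\shift(\Pi)^{I'}$ each shifted rule becomes $p_i \leftarrow \beta \otimes (1-I(p_1)) \otimes \cdots$, so a submodel $J$ only guarantees $J(p_i) \geq J(\beta) - \sum_{j\neq i} I(p_j)$. Since $J(p_j) \leq I(p_j)$, this is \emph{weaker} than $J(p_i) \geq J(\beta) - \sum_{j\neq i} J(p_j)$, i.e., weaker than $\sum_j J(p_j) \geq J(\beta)$. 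Hence $J\restriction\At(\Pi)$ need not satisfy the original rule in $\Pi^I$ at all. The same issue arises for $\otimes$ and $\veebar$: the shifted reduct freezes the ``other'' head atoms at their $I$-values, which is too generous when $J$ is strictly below $I$ on several of them.

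The paper's proof fills exactly this gap, and it is not symmetric to the other direction. Starting from $J$, it builds an increasing chain $K_0 := J \subseteq K_1 \subseteq \cdots \subseteq K_n \subset I$ by raising the head atoms $p_n, p_{n-1}, \ldots, p_1$ one at a time, each just enough to satisfy the original rule, and after each raise taking a minimal model of $(\Pi'')^I$ above the current point. The HCF ordering (no arc from $p_i$ to $p_j$ for $i<j$) is what guarantees that raising $p_{n-i}$ cannot force an increase of any $p_j$ with $j < n-i$, so the process terminates with some $K_n \subset I$ that models $\Pi^I$. Your sketch mentions HCF only to justify that ``a local decrease at one head atom cannot be forced by positive recursion to be compensated,'' but the actual mechanism is the opposite: one must \emph{increase} $J$ in a controlled order. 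Also, your remark that ``the converse direction is symmetric'' is inaccurate; the direction from countermodels of the original to countermodels of the shifted program is the easy one (it follows immediately from $J\leq I$), whereas the direction you attempted is the one that needs the chain construction.
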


\section{Translation into SMT}\label{sec:translation}

We now define a translation $\smt$ mapping $\Pi$ into a $\Sigma$-theory, where 
$\Sigma^C = \At(\Pi)$, and $\Sigma^V = \{x_p \mid p \in \At(\Pi)\}$.
The theory has two parts, \out and \inn, 
for producing a model and checking its minimality, respectively.
In more detail, $f \in \{\out, \inn\}$ is the following:
for $c \in [0,1]$, $f(c) = c$;
for $p \in \At(\Pi)$, $f(p)$ is $p$ if $f = \out$, and $x_p$ otherwise;
$f(\naf \alpha) = 1 - \out(\alpha)$;
$f(\alpha \oplus \beta) = \ite(t \leq 1, t, 1)$, where $t$ is $f(\alpha) + f(\beta)$;
$f(\alpha \otimes \beta) = \ite(t \geq 0, t, 0)$, where $t$ stands for $f(\alpha) + f(\beta) - 1$;
$f(\alpha \veebar \beta) = \ite(f(\alpha) \geq f(\beta), f(\alpha), f(\beta))$;
$f(\alpha \barwedge \beta) = \ite(f(\alpha) \leq f(\beta), f(\alpha), f(\beta))$;
$f(\alpha \leftarrow \beta) = f(\alpha) \geq f(\beta)$.
Note that propositional atoms are mapped to constants by \out, and to variables by \inn.
Moreover, negated expressions are always mapped by \out. 
Define $\smt(\Pi):=\{p \in [0,1] \mid p \in \At(\Pi)\} \cup \{\out(r) \mid r \in \Pi\} \cup \{\varphi_{\inn}\}$, where 
\begin{equation}
\varphi_\inn := \forall\{x_p \mid p \in \At(\Pi)\}.\bigwedge_{p \in \At(\Pi)} x_p \in [0,p] \wedge \bigwedge_{r \in \Pi} \inn(r) \rightarrow \bigwedge_{p \in \At(\Pi)} x_p = p.
\end{equation}

\begin{example}\label{ex:naive}
Consider the program $\Pi_2=\{p \leftarrow q \veebar \naf s,$ $q \oplus s \leftarrow \naf\naf p\}$.
The theory $\smt(\Pi_2)$ is
$\{p \in [0,1],$ $q \in [0,1],$ $s \in [0,1]\} \cup \{p \geq \ite(q \geq 1-s, q, 1-s),$
$\ite(q + s \leq 1, q+s, 1) \geq 1 - (1 - p)\} \cup \{\forall x_p.\forall x_q.\forall x_s.x_p \in [0,p] \wedge x_q \in [0,q] \wedge x_s \in [0,s] \wedge x_p \geq \ite(x_q \geq 1-s, x_q, 1-s) \wedge \ite(x_q + x_s \leq 1, x_q+x_s, 1) \geq 1 - (1 - p) \rightarrow x_p = p \wedge x_q = q \wedge x_s = s\}$.
Let \A be a $\Sigma$-structure such that $p^\A = q^\A = 1$ and $s^\A = 0$.
It can be checked that $\A \models \smt(\Pi_2)$.
Also note that $I(p) = I(q) = 1$ and $I(s) = 0$ implies $I \in \SM(\Pi_2)$.
\end{example}

For an interpretation $I$ of $\Pi$, let $\A_I$ be the one-to-one $\Sigma$-structure for $\smt(\Pi)$ such that $p^{\A_I} = I(p)$, for all $p \in \At(\Pi)$.

\begin{restatable}{theorem}{ThmSmt}\label{thm:smt}
Let $\Pi$ be a FASP program.
$I \in \SM(\Pi)$ if and only if $\A_I \models \smt(\Pi)$.
\end{restatable}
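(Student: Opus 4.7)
The plan is to decompose the biconditional along the three constituent parts of $\smt(\Pi)$: the atomic bounds, the outer encoding $\out(r)$, and the quantified minimality formula $\varphi_\inn$.

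First I would establish a key evaluation lemma by structural induction on fuzzy expressions $\alpha$: under $\A_I$, $\out(\alpha)^{\A_I} = I(\alpha)$. The base cases ($c \in [0,1]$ and atoms $p$) hold by construction of $\A_I$, and each connective case is a routine unfolding of the $\ite$-based encoding against the real-valued semantics; for instance, $\out(\alpha \oplus \beta)^{\A_I} = \ite(I(\alpha)+I(\beta) \leq 1, I(\alpha)+I(\beta), 1) = \min\{I(\alpha)+I(\beta), 1\} = I(\alpha \oplus \beta)$, and similarly for $\otimes$, $\veebar$, $\barwedge$, and negation as failure. An immediate consequence is that $\A_I \models \out(\alpha \leftarrow \beta)$ iff $I(\alpha) \geq I(\beta)$, hence $\A_I$ satisfies the atomic bounds together with all $\out(r)$ precisely when $I$ is a fuzzy interpretation with $I \models \Pi$.

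Next I would handle the quantified formula $\varphi_\inn$. For each interpretation $J$, extend $\A_I$ to $\A_{I,J}$ by setting $x_p^{\A_{I,J}} := J(p)$ for every $p \in \At(\Pi)$; then $\varphi_\inn$ holds under $\A_I$ iff every $J$ with $J(p) \in [0, I(p)]$ for all $p \in \At(\Pi)$ (equivalently, $J \subseteq I$ pointwise) such that $\A_{I,J} \models \inn(r)$ for all $r \in \Pi$ must coincide with $I$ on $\At(\Pi)$. A second structural induction, paralleling the first, shows that $\inn(\alpha)^{\A_{I,J}}$ equals $J(\alpha^I)$, where $\alpha^I$ denotes the expression obtained from $\alpha$ by replacing each subformula $\naf \gamma$ with the constant $1 - I(\gamma)$. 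The negation case is the critical one: since $\inn(\naf \gamma) = 1 - \out(\gamma)$, the first evaluation lemma forces this subterm to evaluate to $1 - I(\gamma)$ regardless of $J$, which is exactly the substitution performed by the reduct $\Pi^I$. Therefore $\A_{I,J} \models \inn(r)$ iff $J \models r^I$, and consequently $\A_I \models \varphi_\inn$ iff no $J \subset I$ satisfies $\Pi^I$.

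Combining the two halves yields the theorem: $\A_I \models \smt(\Pi)$ iff $I \models \Pi$ and no $J \subset I$ satisfies $\Pi^I$, which is exactly the definition of $I \in \SM(\Pi)$. The main subtlety is the careful treatment of negation in the $\inn$ induction, ensuring that negated subexpressions are evaluated with respect to $I$ (via $\out$) rather than $J$ so that the encoding faithfully mirrors the reduct; the remaining work is a bookkeeping exercise matching the $\ite$ encodings of the \Luka and \Godel operators against their $[0,1]$-valued definitions.
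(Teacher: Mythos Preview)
Your proposal is correct and follows essentially the same approach as the paper: a structural induction showing that the $\out$/$\inn$ encodings faithfully track the FASP semantics, from which the equivalence between stable models and $\Sigma$-models of $\smt(\Pi)$ follows. Your treatment of $\varphi_\inn$---introducing the extended structure $\A_{I,J}$ and relating $\inn(\alpha)^{\A_{I,J}}$ to $J(\alpha^I)$ via the reduct---is in fact more explicit than the paper's own proof, which dispatches the quantified formula in a single concluding sentence after carrying the induction for both $\out$ and $\inn$ simultaneously.
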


\subsection{Completion}\label{sec:completion}

A drawback of \smt is that it produces quantified theories, which are usually handled by incomplete 
heuristics in SMT solvers \cite{DBLP:conf/cav/GeM09}.
Structural properties of FASP programs may be exploited to obtain a more tailored translation that extends \emph{completion} \cite{DBLP:conf/adbt/Clark77} to the fuzzy case.
Completion is a translation into propositional theories used to compute stable models of acyclic ASP programs with atomic heads.
Intuitively, the models of the completion of a program $\Pi$ coincide with the \emph{supported models} of $\Pi$, 
i.e., those models $I$ with $I(p) = \max\{I(\beta) \mid p \leftarrow \beta \in \Pi\}$, for each $p \in \At(\Pi)$.
This notion was extended to FASP programs by \citeN{DBLP:journals/tplp/JanssenVSC12}, with fuzzy propositional theories as target framework.
We adapt it to produce
$\Sigma$-theories, for the $\Sigma$ defined before.

Let $\Pi$ be a program with atomic heads, and $p \in \At(\Pi)$.
We denote by $\heads(p,\Pi)$ the set of rules in $\Pi$ whose head is $p$, and by $\constraints(\Pi)$ the set of rules in $\Pi$ whose head 
is a numeric constant.
The completion of $\Pi$ is the $\Sigma$-theory:
\begin{equation}
    \begin{split}
        \comp(\Pi) :={} & \{p \in [0,1] \wedge p = \supp(p,\heads(p,\Pi)) \mid p \in \At(\Pi)\} \cup {} \\
            & \{\out(r) \mid r \in \constraints(\Pi)\},
     \end{split}
\end{equation}
where $\supp(p,\emptyset):=0$, and for $n \geq 1$,
$\supp(p,\{p \leftarrow \beta_i \mid i \in [1..n]\}):=\ite(\out(\beta_1) \geq t, \out(\beta_1), t)$,
where $t$ is $\supp(p,\{p \leftarrow \beta_i \mid i \in [2..n]\})$.
Basically, $\supp(p,\heads(p,\Pi))$ yields a term interpreted as $\max\{\out(\beta)^{\A_I} \mid p \leftarrow \beta \in \Pi\}$ by all $\Sigma$-structures $\A$.

\begin{example}
Since $\Pi_2$ in Example~\ref{ex:naive} is acyclic, $\Pi_2 \equiv_{\At(\Pi_2)} \shift(\Pi_2)$.
The theory $\comp(\shift(\Pi_2))$ is 
$\{p \in [0,1] \wedge p = \ite(q \geq 1-s, q, 1-s),$
$q \in [0,1] \wedge q = \ite(t_1 \geq 0, t_1, 0),$
$s \in [0,1] \wedge s = \ite(p-q \geq 0, p-q, 0)\}$, where $t_1$ is $(1 - (1-p)) + (1-s) -1$, and $t_2$ is $(1 - (1-p)) + (1-q) -1$.
\end{example}

Since $\smt(\Pi)$ and $\comp(\Pi)$ have the same constant symbols, $\A_I$ defines a one-to-one mapping between interpretations of $\Pi$ and $\Sigma$-structures of $\comp(\Pi)$.
An interesting question is whether correctness can be extended to HCF programs, for example by first shifting heads.
Notice that (\ref{eq:shift:otimes}) and (\ref{eq:veebar:2}) introduce rules of the form $q \leftarrow q \oplus q$ through the shift of $\otimes$ or $\veebar$, breaking acyclicity.
However, $q \leftarrow q \oplus q$ is a common pattern to force a Boolean interpretation of $q$, which can be encoded by integrality constraints in the theory.
The same observation applies to rules of the form $q \otimes q \leftarrow q$.
Define $\bool(\Pi):=\{p \leftarrow p \oplus p \in \Pi\} \cup \{p \otimes p \leftarrow p \in \Pi\}$,
and let $\bool^-(\Pi)$ be the program obtained from $\Pi \setminus \bool(\Pi)$ by performing the following operations for each $p \in \At(\bool(\Pi))$:
first, occurrences of $p$ in rule bodies are replaced by $b_p$, where $b_p$ is a fresh atom;
then, a choice rule $b_p \leftarrow \naf\naf b_p$ is added.
The refined completion is the following:
\begin{equation}
    \rcomp(\Pi) := \comp(\bool^-(\Pi)) \cup \{b_p = \ite(p > 0, 1, 0) \mid p \in \At(\bool(\Pi))\},
\end{equation}
and the associated $\Sigma$-structure $\A_I^r$ is such that $p^{\A_I^r} = I(p)$ for $p \in \At(\Pi)$, and $b_p^{\A_I^r}$ equals 1 if $I(p) > 0$, and 0 otherwise, for $p \in \At(\bool(\Pi))$.

\begin{restatable}{theorem}{ThmComp}\label{thm:comp}
Let $\Pi$ be a program such that $\Pi \setminus \bool(\Pi)$ is acyclic.
Then, $I \in \SM(\Pi)$ if and only if $\A_I^r \models \rcomp(\shift(\simp(\Pi)))$.
\end{restatable}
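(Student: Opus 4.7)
The plan is to chain together the three simplifications (\simp, \shift, and \rcomp) and verify that each preserves or correctly captures the semantics. By Proposition~\ref{prop:simp}, $\SM(\Pi)$ and $\SM(\simp(\Pi))$ coincide on $\At(\Pi)$. Since \simp only introduces fresh atoms bound to subexpressions by rules like $p \leftarrow \beta$ and never adds positive dependencies that were not already present, the hypothesis that $\Pi \setminus \bool(\Pi)$ is acyclic lifts to $\simp(\Pi) \setminus \bool(\simp(\Pi))$ being acyclic; in particular $\simp(\Pi)$ is HCF. Thus Theorem~\ref{thm:shift} applies and $\shift(\simp(\Pi))$ is stable-model equivalent to $\simp(\Pi)$ on $\At(\simp(\Pi))$ and hence, by composition, to $\Pi$ on $\At(\Pi)$.

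Next I would analyze the shape of $\Pi' := \shift(\simp(\Pi))$. After shifting, every rule has an atomic head, and the only new rules that could create cycles are the booleanization rules $q \leftarrow q \oplus q$ introduced by the \otimes-shift \eqref{eq:shift:otimes} and the \veebar-shift \eqref{eq:veebar:2}. These rules all belong to $\bool(\Pi')$; apart from them, shift only adds edges $(p_i, q)$ for fresh $q$ whose only positive occurrence is in bodies, so $\Pi' \setminus \bool(\Pi')$ remains acyclic with atomic heads. This is the structural prerequisite for completion to be sound.

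I would then argue correctness of $\rcomp(\Pi')$ by decomposing the stable-model condition on $\Pi'$. A rule $q \leftarrow q \oplus q$ or $q \otimes q \leftarrow q$ is satisfied by $I$ together with the minimality condition exactly when $I(q) \in \{0,1\}$, so $I$ is a stable model of $\Pi'$ iff its restriction to $\At(\Pi') \setminus \At(\bool(\Pi'))$ extends to a stable model of $\bool^-(\Pi')$ in which $b_p$ takes the Boolean value $\ite(I(p)>0,1,0)$. Since $\bool^-(\Pi')$ is acyclic with atomic heads, the standard fuzzy completion result of \citeN{DBLP:journals/tplp/JanssenVSC12} applies: supported models and stable models coincide, and the supportedness condition $p = \max\{\inn(\beta) \mid p \leftarrow \beta\}$ is faithfully encoded by the equations $p = \supp(p,\heads(p,\bool^-(\Pi')))$ in $\comp(\bool^-(\Pi'))$. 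The choice rules $b_p \leftarrow \naf\naf b_p$ leave $b_p$ unconstrained by completion, which is precisely what is needed so that the added constraint $b_p = \ite(p>0,1,0)$ can pin $b_p$ to the Boolean status of $p$ and thereby replay the effect of the removed bool rules.

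Two directions of equivalence then need to be put together: given $I \in \SM(\Pi)$, lift it through \simp and \shift, define $\A_I^r$, and verify each conjunct of $\rcomp(\Pi')$; conversely, starting from a $\Sigma$-structure satisfying $\rcomp(\Pi')$, read off an interpretation of $\Pi'$, check that the Boolean equations force the bool rules, and use the converse direction of the completion theorem to conclude stability. The main obstacle I expect is the bookkeeping around the interaction between the $b_p$ substitution in $\bool^-$ and the \supp terms: one must check that every rule body of $\Pi'$ in which an atom of $\bool(\Pi')$ occurs is handled consistently by the $b_p = \ite(p>0,1,0)$ constraint, in both the supportedness and the minimality arguments, so that no spurious supporting derivation from the fractional part of $p$ sneaks into the completion equations.
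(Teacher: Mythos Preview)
Your overall strategy matches the paper's: reduce via \simp and \shift, then apply the fuzzy completion theorem of \citeN{DBLP:journals/tplp/JanssenVSC12} to the acyclic program $\bool^-(\Pi')$, and recover the effect of the removed bool rules through the $b_p=\ite(p>0,1,0)$ constraints. That high-level chain is exactly what the paper does (in a much terser form).

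There is, however, a concrete gap in your structural analysis. You assert that ``apart from [the rules $q\leftarrow q\oplus q$], shift only adds edges $(p_i,q)$ for fresh $q$ whose only positive occurrence is in bodies, so $\Pi'\setminus\bool(\Pi')$ remains acyclic.'' This is false for the $\veebar$-shift. Rule~\eqref{eq:veebar:2} introduces $q_i\leftarrow(p_i\veebar\cdots\veebar p_n)\otimes\naf(\cdots)$, so the fresh atom $q_i$ does occur in a rule head, and this rule contributes edges $(q_i,p_j)$ for $j\in[i..n]$. Together with the edge $(p_i,q_i)$ coming from the positive occurrence of $q_i$ in the body of~\eqref{eq:veebar:1}, you get a genuine two-atom cycle $p_i\to q_i\to p_i$ in $\G_{\Pi'}$ that survives after deleting the bool rule $q_i\leftarrow q_i\oplus q_i$. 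Hence $\Pi'\setminus\bool(\Pi')$ is \emph{not} acyclic in general.

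What actually rescues acyclicity---and what the paper relies on, stating only ``if $\Pi\setminus\bool(\Pi)$ is acyclic then $\Pi''$ is acyclic''---is the body substitution performed by $\bool^-$: since $q_i\in\At(\bool(\Pi'))$, the positive body occurrence of $q_i$ in~\eqref{eq:veebar:1} is replaced by the fresh $b_{q_i}$, and the choice rule $b_{q_i}\leftarrow\naf\naf b_{q_i}$ contributes no positive edge. This kills the arc $(p_i,q_i)$ and breaks the cycle. So the right object to argue about is $\bool^-(\Pi')$ directly, not $\Pi'\setminus\bool(\Pi')$; once you make that correction, the rest of your outline goes through.
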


Note that in the above theorem $\simp$ and $\shift$ are only required because $\comp$ and $\rcomp$ are defined for normal programs.

\subsection{Ordered Completion}\label{sec:ordered}

Stable models of recursive programs do not coincide with supported models, making completion unsound.
To regain soundness, \emph{ordered completion} \cite{DBLP:journals/amai/Ben-EliyahuD94,DBLP:conf/ecai/Janhunen04,DBLP:journals/amai/Niemela08,DBLP:journals/ai/AsuncionLZZ12}
uses a notion of \emph{acyclic support}.
Let $\Pi$ be an ASP program with atomic heads.
$I$ is a stable model of $\Pi$ if and only if there exists a \emph{ranking} $r$ such that, for each $p \in I$, $I(p) = \max\{I(\beta) \mid p \leftarrow \beta \in \Pi,$ $r(p) = 1+\max(\{0\} \cup \{r(q) \mid q \in \pos(\beta)\})\}$ \cite{DBLP:conf/ecai/Janhunen04}.
This holds because the reduct $\Pi^I$ is also \naf-free, and thus its unique minimal model is the least fixpoint of the immediate consequence operator $\T_{\Pi^I}$, mapping interpretations $J$ to $\T_{\Pi^I}(J)$ where
$\T_{\Pi^I}(J)(p) := \max\{J(\beta) \mid p \leftarrow \beta \in \Pi^I\}$.
Since $J(\alpha \wedge \beta) \leq J(\alpha)$ and $J(\alpha \wedge \beta) \leq J(\beta)$, for all interpretations $J$, the limit is reached in $|\At(\Pi)|$ steps.
For FASP programs, however, the least fixpoint of $\T_{\Pi^I}$ is not reached within a linear number of applications \cite{DBLP:books/daglib/0035275}.
For example, $2^n$ applications are required for the program
$\{p \leftarrow p \oplus c\}$, for $c = 1/2^n$ and $n \geq 0$ \cite{DBLP:journals/ijar/BlondeelSVC14}.
On the other hand, for $\odot \in \{\barwedge,\otimes\}$ and all interpretations $J$, we have 
$J(\alpha \odot \beta) \leq J(\alpha)$ and $J(\alpha \odot \beta) \leq J(\beta)$.
The claim can thus be extended to the fuzzy case if recursion over $\oplus$ and $\veebar$ is disabled.

\begin{restatable}{lemma}{LemRank}\label{lem:rank}
Let $\Pi$ be such that $\Pi$ has atomic heads and non-recursive $\oplus,\veebar$ in rule bodies.
Let $I$ be an interpretation for $\Pi$.
The least fixpoint of $\T_{\Pi^I}$ is reached in $|\At(\Pi)|$ steps.
\end{restatable}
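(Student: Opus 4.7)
The plan is a two-level induction along a topological ordering $C_1,\dots,C_m$ of the strongly connected components of $\G_\Pi$ (arranged so that every positive body atom of a rule with head in $C_i$ lies in $C_1\cup\dots\cup C_i$). Let $J^k:=\T_{\Pi^I}^k(\mathbf{0})$ and $N_i:=\sum_{j\le i}|C_j|$. Because $\Pi$ has atomic heads, the reduct $\Pi^I$ is \naf-free, so $\T_{\Pi^I}$ is a monotone operator on the complete lattice $[0,1]^{\At(\Pi)}$ and, by Tarski--Knaster, its least fixpoint $J^*$ coincides with the least model of $\Pi^I$. The outer induction claim is that $J^{N_i}(p)=J^*(p)$ for every $p\in C_1\cup\dots\cup C_i$, which at $i=m$ gives the stated bound $|\At(\Pi)|$. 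Since monotonicity of the iterates forces $J^{N_{i-1}+k}$ to stay at $J^*$ on $C_{<i}$ for all $k\ge 0$, only $|C_i|$ further steps on $C_i$ need to be accounted for.

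The technical workhorse is the following sublemma: for every rule $p\leftarrow\beta$ in $\Pi^I$ with $p$ in an SCC $C$ of $\G_\Pi$ and every $q\in\pos(\beta)\cap C$, one has $J(\beta)\le J(q)$ for every interpretation $J$. To prove it I walk the positive path in the syntax tree of $\beta$ from the root down to $q$: every connective on this path must be $\otimes$ or $\barwedge$, since an occurrence of $\oplus$ or $\veebar$ would correspond in $\simp(\Pi)$ to a rule whose head $a$ and whose body atom continuing the path to $q$ would both land inside the SCC of $p$ (and of $q$) in $\G_{\simp(\Pi)}$ via the cycle $p\to^*a\to^*q\to^*p$, contradicting the non-recursive $\oplus,\veebar$ hypothesis. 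The inequality then follows by iterating $J(\alpha_1\odot\alpha_2)\le J(\alpha_1),J(\alpha_2)$ for $\odot\in\{\otimes,\barwedge\}$ along the path.

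For the inner step set $D_k:=\{p\in C_i:J^{N_{i-1}+k}(p)=J^*(p)\}$ and $U_k:=C_i\setminus D_k$; monotonicity gives $D_k\subseteq D_{k+1}$, so it suffices to show $|D_{k+1}|>|D_k|$ whenever $U_k\ne\emptyset$. Suppose instead $D_{k+1}=D_k$. Then for every $p\in U_k$ and every witness rule $p\leftarrow\beta$ (one with $J^*(\beta)=J^*(p)$), the drop $J^{N_{i-1}+k}(\beta)<J^*(\beta)$ must be caused by some atom of $\pos(\beta)\cap U_k$, because the externals are already at $J^*$ by the outer IH and $\beta$ is monotone in its positive atoms. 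Let $H$ be the directed graph on $U_k$ with an arc from $p$ to each such atom obtainable from any witness of $p$, and let $T$ be a terminal SCC of $H$. Terminality forces every witness of every $p\in T$ to satisfy $\pos(\beta)\cap U_k\subseteq T$; traversing any cycle inside $T$ with the workhorse sublemma then forces all atoms of $T$ to share a single fixpoint value $v$.

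To close, define $K:=J^*$ outside $T$ and $K:=v'$ on $T$, with $v'\in[M,v)$ where $M:=\max\{J^*(\beta):p\leftarrow\beta\in\Pi^I,\,p\in T,\,J^*(\beta)<v\}$ (and $M:=0$ if this set is empty), so that $M<v$. Then $K\lneq J^*$ and $K$ is a model of $\Pi^I$: rules with head outside $T$ are satisfied via $K\le J^*$ and the monotonicity of body evaluation; witness rules with head in $T$ satisfy $K(\beta)\le K(q)=v'$ for some $q\in T\cap\pos(\beta)$ by the sublemma together with the terminal-SCC property; non-witness rules with head in $T$ satisfy $K(\beta)\le J^*(\beta)\le M\le v'$. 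This contradicts the minimality of $J^*$, closing both inductions. The hard part is exactly this model-construction step---in particular, verifying that terminality of $T$ absorbs every witness-dependency into $T$, so that the sublemma can pin each witness body at or below $v'$.
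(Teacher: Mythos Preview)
Your proof is correct but proceeds along a genuinely different route from the paper's. The paper argues directly by a chain-and-pigeonhole contradiction: if $J_n\ne J_{n+1}$, one can trace backwards a sequence $p_0,\ldots,p_{n+1}$ where each $p_{i+1}$ strictly increased at step $i{+}1$ via a rule whose body contains $p_i$; the key inequality $J_{i+1}(p_{i+1})\le J_i(p_i)$ (which is exactly your workhorse sublemma, the paper's property~$(*)$) holds whenever the inferring body uses only $\otimes,\barwedge$ along the path to $p_i$, and a repeated atom on the chain gives a simultaneous strict-increase and non-increase, a contradiction. Non-recursive $\oplus,\veebar$ is handled by observing that any violation of $(*)$ splits the chain into two disjoint segments, one of which still has a repeat. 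You instead stratify by the SCCs of $\G_\Pi$, and inside each SCC argue model-theoretically: if the set of atoms already at their fixpoint value stagnates, a terminal SCC of the residual dependency graph has a single common value $v$, and uniformly lowering that terminal set to some $v'\in[M,v)$ produces a strictly smaller model of $\Pi^I$, contradicting that $J^*$ is the least model. Both arguments rest on the same structural fact---within an SCC, body evaluation is dominated by each co-SCC positive body atom---but the paper's counting argument is more elementary (no appeal to minimality of $J^*$, no model construction), while your decomposition makes the SCC structure and the per-component bound explicit, which is convenient if one wants a stratified ranking or plans to reuse the argument for ordered completion.
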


Ordered completion can be defined for this class of FASP programs.
Let $J$ be the least fixpoint of $\T_{\Pi^I}$.
The \emph{rank} of $p \in \At(\Pi)$ in $J$ is the step at which $J(p)$ is derived.
Let $r_p$ be a constant symbol expressing the rank of $p$.
Define $\rank(\emptyset):=1$, and $\rank(\{q_i \mid i \in [1..n]\}):=\ite(r_{q_1} \geq t, r_{q_1}, t)$ for $n \geq 1$,  where $t=\rank(\{q_i \mid i \in [2..n]\})$.
Also define $\osupp(p,\emptyset):=0$, and for $n \geq 1$, 
\begin{equation*}
\osupp(p,\{p \leftarrow \beta_i \mid i \in [1..n]\}):=\bigvee_{i \in [1..n]}(p = \out(\beta_i) \wedge r_p = 1 + \rank(\pos(\beta_i))).
\end{equation*}
The ordered completion of $\Pi$, denoted $\ocomp(\Pi)$, is the following theory:
\begin{equation}
    \comp(\Pi) \cup \{r_p\in [1..|\At(\Pi)|] \wedge p > 0 \rightarrow \osupp(p,\heads(p,\Pi)) \mid p \in \At(\Pi)\}.
\end{equation}
%
\begin{example}
The $\Sigma$-theory $\ocomp(\{p \leftarrow 0.1,$ $p \leftarrow q$, $q \leftarrow p\})$ is the following:
\begin{align*}
    & \{p \in [0,1] \wedge p = \ite(0.1 \geq q, 0.1, q)\} \cup \{q \in [0,1] \wedge q = p\} \\
    {}\cup{} & \{r_p \in [1..2] \wedge p > 0 \rightarrow (p = 0.1 \wedge r_p = 1 + 0) \vee (p = q \wedge r_p = 1 + r_q)\} \\
    {}\cup{} & \{r_q \in [1..2] \wedge q > 0 \rightarrow q = p \wedge r_q = 1 + r_p)\}.
\end{align*}
The theory is satisfied by \A if $p^\A = q^\A = 0.1$, $r_p^\A = 1$, and $r_q^\A = 2$.
\end{example}
%

The correctness of \ocomp, provided that $\Pi$ satisfies the conditions of Lemma~\ref{lem:rank}, is proved by the following mappings:
for $I \in \SM(\Pi)$, let $\A^o_I$ be the $\Sigma$-model for $\ocomp(\Pi)$ such that $p^{\A^o_I} = I(p)$ and $r_p^{\A^o_I}$ is the rank of $p$ in $I$, for all $p \in \At(\Pi)$;
for $\A$ such that $\A \models \ocomp(\Pi)$, let $I_\A$ be the interpretation for $\Pi$ such that $I_\A(p) = p^\A$, for all $p \in \At(\Pi)$.

\begin{restatable}{theorem}{ThmOcomp}\label{thm:ocomp}
Let $\Pi$ be an HCF program with non-recursive $\oplus$ in rule bodies, and whose head connectives are $\barwedge,\oplus$.
If $I \in \SM(\Pi)$ then $\A^o_I \models \ocomp(\shift(\simp(\Pi)))$.
Dually, if $\A \models \ocomp(\shift(\simp(\Pi)))$ then $I_\A \in \SM(\Pi)$.
\end{restatable}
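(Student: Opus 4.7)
The plan is to chain the equivalences of Proposition~\ref{prop:simp} and Theorem~\ref{thm:shift} so as to reduce the statement to the program $\Pi' := \shift(\simp(\Pi))$, which has atomic heads, and then to establish correctness of $\ocomp(\Pi')$ via the ranking function extracted from the fixpoint of $\T_{\Pi'^{I}}$. I first verify that the hypotheses of Lemma~\ref{lem:rank} remain satisfied for $\Pi'$: $\simp$ removes $\barwedge$ from heads and $\veebar$ from bodies while only introducing sink atoms for subexpressions, and the shift~(\ref{eq:shift:oplus}) applied to the remaining $\oplus$-heads produces bodies containing only $\otimes$ and $\naf$. Hence $\Pi'$ has atomic heads and non-recursive $\oplus, \veebar$ in bodies, so by Lemma~\ref{lem:rank} the fixpoint of $\T_{\Pi'^{I}}$ is reached within $|\At(\Pi')|$ steps, giving each atom $p$ a well-defined rank $r_p(I) \in [1..|\At(\Pi')|]$.

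For the forward direction, given $I \in \SM(\Pi)$ I would extend it to a stable model of $\Pi'$ via the equivalences, and set $p^{\A^o_I} := I(p)$ and $r_p^{\A^o_I} := r_p(I)$. The completion part of $\ocomp(\Pi')$ is immediate: any stable model is supported, so $I(p) = \max\{I(\beta) \mid p \leftarrow \beta \in \Pi'\}$, matching the equation enforced by $\comp(\Pi')$. For the ordered part, if $I(p) > 0$ I pick a rule $p \leftarrow \beta_0$ whose reduct body attains the maximum at step $r_p(I) - 1$ of the fixpoint computation; using that $J(\alpha \odot \gamma) \leq \min\{J(\alpha), J(\gamma)\}$ for $\odot \in \{\otimes, \barwedge\}$ and that non-recursive $\oplus, \veebar$ operate on atoms from strictly lower strata (whose values are already final), the body $\beta_0$ stabilizes exactly when its positive atoms do, yielding $r_p(I) - 1 = \max\{r_q(I) \mid q \in \pos(\beta_0)\}$, i.e., $r_p(I) = 1 + \rank(\pos(\beta_0))^{\A^o_I}$; supportedness additionally gives $I(\beta_0) = I(p)$, producing the required disjunct of $\osupp$.

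The backward direction proceeds by contradiction. From $\A \models \ocomp(\Pi')$ the completion part gives $I_\A \models \Pi'$, so only minimality of $I_\A$ over the reduct $\Pi'^{I_\A}$ needs to be shown. Suppose some $J < I_\A$ satisfies $\Pi'^{I_\A}$, and let $p$ be of minimum $r_p^\A$ among atoms with $J(p) < I_\A(p)$; necessarily $I_\A(p) > 0$ (since $J(p) \geq 0$), so $\osupp$ selects a rule $p \leftarrow \beta$ with $I_\A(p) = I_\A(\beta)$ and $r_q^\A < r_p^\A$ for every $q \in \pos(\beta)$. By minimality of $r_p^\A$, $J$ agrees with $I_\A$ on $\pos(\beta)$; since negative subexpressions are replaced by constants in $\Pi'^{I_\A}$, this forces $J(\beta^{I_\A}) = I_\A(\beta) = I_\A(p)$, and the reduct rule then yields $J(p) \geq I_\A(p)$, contradicting the choice of $p$. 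The main obstacle is the forward direction's verification that the stabilization step of a body matches the maximum rank among its positive atoms: this relies crucially on the monotonicity and argument-boundedness of $\otimes$ and $\barwedge$ combined with the stratified treatment of non-recursive $\oplus, \veebar$; showing that none of the intermediate fixpoint iterates can anticipate the value of $p$ before all such positive atoms have settled is what ties $r_p$ to the $\osupp$ clause precisely, rather than merely providing an inequality.
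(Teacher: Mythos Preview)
Your forward direction is essentially the paper's argument: both assign to $r_p$ the first step at which $p$ reaches its final value in the $\T_{\Pi'^{I}}$ iteration, and both then claim that some supporting rule $p\leftarrow\beta$ witnesses $r_p=1+\max\{r_q\mid q\in\pos(\beta)\}$. The paper states without further detail that such a rule exists with every $q\in\pos(\beta)$ of rank at most $r_p-1$ and at least one of rank exactly $r_p-1$; you flag precisely this point as the ``main obstacle'' and sketch the same justification via argument-boundedness of $\otimes,\barwedge$.

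Your backward direction, however, takes a different route. The paper argues constructively: it shows by induction on $r_p^{\A}$ that the $r_p^{\A}$-th iterate of $\T_{\Pi'^{I_\A}}$ already assigns $I_\A(p)$ to $p$, hence $I_\A$ coincides with the least fixpoint of the reduct and is stable. You instead argue by contradiction: assuming some $J\subset I_\A$ models the reduct, you pick an atom $p$ of minimum $r_p^{\A}$ among those where $J$ and $I_\A$ differ, use the $\osupp$ disjunct to obtain a supporting rule whose positive body atoms all have strictly smaller rank (hence agree under $J$ and $I_\A$), and derive $J(p)\ge I_\A(p)$. Both arguments are sound; yours is the standard level-mapping style minimality proof from the ASP literature and does not need to re-run the fixpoint computation, while the paper's version makes explicit that $I_\A$ is the least fixpoint of $\T_{\Pi'^{I_\A}}$, tying the ranks $r_p^{\A}$ directly back to the operational semantics.
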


The above theorem does not apply in case of recursive $\oplus$ in rule bodies.
For example, $\{p \leftarrow p \oplus 0.1\}$ has a unique stable model assigning $1$ to $p$, while its ordered completion is
the following $\Sigma$-theory with no $\Sigma$-model:
$\{p \in [0,1] \wedge p = \ite(p + 0.1 \leq 1, p + 0.1, 1)\} \cup \{r_p \in [1..1] \wedge p > 0 \rightarrow p = \ite(p + 0.1 \leq 1, p + 0.1, 1) \wedge r_p = 1 + r_p\}$.

\section{Implementation and Experiment}\label{sec:experiment}

We implemented the translations from Section~\ref{sec:simplification} in the new FASP solver \textsc{fasp2smt}.
\textsc{fasp2smt} is written in \textsc{python}, and uses \textsc{gringo} \cite{DBLP:conf/lpnmr/GebserKKS11} to obtain a ground representation of the input program, and \textsc{z3} \mbox{\cite{DBLP:conf/tacas/MouraB08}} to solve SMT instances encoding ground programs.
The output of \textsc{gringo} encodes a propositional program, say $\Pi$, that is conformant with the syntax in Section~\ref{sec:background}.
The components of $\Pi$ are computed, and the structure of the program is analyzed.
If $\Pi \setminus \bool(\Pi)$ is acyclic, $\rcomp(\shift(\simp(\Pi)))$ is built.
If $\Pi$ is HCF with non-recursive $\oplus$ in rule bodies, and only $\barwedge$ and $\oplus$ in rule heads, then $\ocomp(\shift(\simp(\Pi)))$ is built.
In all other cases, $\smt(\simp(\Pi))$ is built.
The built theory is fed into \textsc{z3}, and either a stable model or the string \textsc{incoherent} is reported.

The performance of \textsc{fasp2smt} was assessed on instances of a benchmark used to evaluate the FASP solver \textsc{ffasp} \cite{DBLP:conf/ecai/MushthofaSC14}.
The benchmark comprises two (synthetic) problems, the fuzzy versions of \emph{Graph Coloring} and \emph{Hamiltonian Path}, originally considered by \citeN{DBLP:journals/tplp/AlvianoP13}.
In Graph Coloring edges of an input graph are associated with truth degrees, and each vertex $x$ is non-deterministically colored with a shadow of gray, i.e., truth degree 1 is distributed among the atoms $\mathit{black}_x$ and $\mathit{white}_x$.
The truth degree of each edge $xy$, say $d$, enforces $d \otimes \mathit{black}_x \otimes \mathit{black}_y = 0$ and $d \otimes \mathit{white}_x \otimes \mathit{white}_y = 0$, i.e., adjacent vertices must be colored with sufficiently different shadows of gray.
Similarly, in Hamiltonian Path vertices and edges of an input graph are associated with truth degrees, and Boolean connectives are replaced by \Luka connectives in the usual ASP encoding.
The truth degree of each edge $xy$, say $d$, is non-deterministically distributed among the atoms $\mathit{in}_{xy}$ and $\mathit{out}_{xy}$.
Reaching a vertex $y$ from the initial vertex $x$ via an edge $xy$ guarantees that $y$ is reached with truth degree $\mathit{in}_{xy}$.
Reaching a third vertex $z$ via an edge $yz$, instead, guarantees that $z$ is reached with truth degree $\mathit{in}_{xy} \otimes \mathit{in}_{yz}$.
In other words, the more uncertain is the selection of an edge $xy$, the more uncertain is the membership of $y$ in the selected path, which in turn implies an even more uncertain membership of any $z$ reached by an edge $yz$.
In the original encodings, \Luka disjunction was used to guess (fuzzy) membership of elements in one of two sets.
For example, Hamiltonian Path used a rule of the form
$\mathit{in}(X,Y) \oplus \mathit{out}(X,Y) \leftarrow \mathit{edge}(X,Y)$, which was shifted and replaced by
$\mathit{in}(X,Y) \leftarrow \mathit{edge}(X,Y) \otimes \naf\mathit{out}(X,Y)$ and
$\mathit{out}(X,Y) \leftarrow \mathit{edge}(X,Y) \otimes \naf\mathit{in}(X,Y)$ by \citeANP{DBLP:journals/tplp/AlvianoP13}.
In fact, in 2013 the focus was on FASP programs with atomic heads and only $\otimes$ in rule bodies, and the shift of $\oplus$ for these programs was implicit in the work of \citeN{DBLP:journals/ijar/BlondeelSVC14}.
Since our focus is now on a more general setting, the original encodings were restored, even if it is clear that \textsc{fasp2smt} shifts such programs by itself.
In fact, Graph Coloring is recognized as acyclic, and Hamiltonian Path as HCF with no $\oplus$ in rule bodies.
It turns out that \textsc{fasp2smt} uses completion for Graph Coloring, and ordered completion for Hamiltonian Path.
The experiment was run on an Intel Xeon CPU 2.4 GHz with 16 GB of RAM.
CPU and memory usage were limited to 600 seconds and 15 GB, respectively.
\textsc{fasp2smt} and \textsc{ffasp} were tested with their default settings, and the performance was measured by \textsc{pyrunlim} (\url{http://alviano.net/software/pyrunlim/}), the tool used in the last ASP Competitions \cite{DBLP:conf/lpnmr/AlvianoCCDDIKKOPPRRSSSWX13,DBLP:journals/corr/CalimeriGMR14}.

\begin{table}[b]
    \caption{Performance of \textsc{fasp2smt} and \textsc{ffasp} (average execution time in seconds; average memory consumption in MB).}\label{tab:experiment}

    \begin{tabular}{rrrrrrrrrrrrrrrrrr}
    \toprule
        &&& \multicolumn{3}{c}{\textsc{fasp2smt}} &  \multicolumn{3}{c}{\textsc{ffasp}} & \multicolumn{3}{c}{\textsc{ffasp} (shifted enc.)} \\
        \cmidrule{4-6}\cmidrule{7-9}\cmidrule{10-12}
    	&	\bf den	&	\bf inst	&	\bf sol	&	\bf time	&	\bf mem	&	\bf sol	&	\bf time	&	\bf mem &	\bf sol	&	\bf time	&	\bf mem \\
	\cmidrule{1-12}
    \parbox[t]{2mm}{\multirow{5}{*}{\rotatebox[origin=c]{90}{\bf graph--col}}} 
	&	20	&	6	&	6	&	94.0	&	174	&	6	&	5.3	&	302	&	6	&	1.5	&	69 \\
	&	40	&	6	&	6	&	102.4	&	178	&	6	&	19.8	&	1112	&	6	&	5.3	&	181 \\
	&	60	&	6	&	6	&	107.6	&	180	&	6	&	46.7	&	2472	&	6	&	11.8	&	342 \\
	&	80	&	6	&	6	&	111.1	&	181	&	6	&	90.1	&	4420	&	6	&	21.0	&	550 \\
	&	100	&	6	&	6	&	111.7	&	181	&	6	&	151.9	&	7025	&	6	&	33.6	&	812 \\
    \rule{0pt}{2.5ex}
    \parbox[t]{2mm}{\multirow{10}{*}{\rotatebox[origin=c]{90}{\bf ham--path}}}
	&	20	&	10	&	10	&	1.7	&	25	&	10	&	17.3	&	410	&	10	&	3.5	&	101 \\
	&	40	&	10	&	10	&	1.8	&	25	&	10	&	20.3	&	462	&	10	&	2.3	&	105 \\
	&	60	&	10	&	10	&	2.1	&	25	&	10	&	13.2	&	481	&	10	&	2.0	&	107 \\
	&	80	&	10	&	10	&	2.4	&	25	&	10	&	32.9	&	868	&	10	&	3.9	&	188 \\
	&	100	&	10	&	10	&	2.1	&	25	&	10	&	69.0	&	1385	&	10	&	6.5	&	323 \\
	&	120	&	10	&	10	&	2.0	&	25	&	10	&	125.5	&	2042	&	10	&	10.5	&	475 \\
	&	140	&	10	&	10	&	1.9	&	25	&	10	&	176.8	&	2821	&	10	&	14.7	&	669 \\
	&	160	&	10	&	10	&	2.2	&	25	&	9	&	139.6	&	3769	&	10	&	20.8	&	960 \\
	&	180	&	10	&	10	&	2.4	&	26	&	8	&	203.1	&	4914	&	10	&	28.9	&	1270 \\
	\bottomrule
    \end{tabular}
\end{table}

The results are reported in Table~\ref{tab:experiment}.
Instances are grouped according to the granularity of numeric constants, where instances with $\textbf{den} = d$ are characterized by numeric constants of the form $n/d$.
There are 6 instances of Graph Coloring and 10 of Hamiltonian Path in each group.
All instances of Graph Coloring are coherent, while there is an average of 4 incoherent instances in each group of Hamiltonian Path.
All instances are solved by \textsc{fasp2smt} (column \textbf{sol}), and the granularity of numeric constants does not really impact on execution time and memory consumption.
The performance is particularly good for Hamiltonian Path, while \textsc{ffasp} is faster than \textsc{fasp2smt} in Graph Coloring for numeric constants of limited granularity.
The performance of \textsc{ffasp} deteriorates when the granularity of numeric constants increases, and 6 timeouts are reported for the largest instances of Hamiltonian Path.
Another strength of \textsc{fasp2smt} is the limited memory consumption compared to \textsc{ffasp}.
If we decrease the memory limit to 3 GB, \textsc{ffasp} runs out of memory on 12 instances of Graph Coloring and 34 instances of Hamiltonian Path, while \textsc{fasp2smt} still succeeds in all instances.
For the sake of completeness, manually shifted encodings were also tested.
The performance of \textsc{fasp2smt} did not change, while \textsc{ffasp} improves considerably, especially regarding memory consumption.
We also tested 180 instances (not reported in Table~\ref{tab:experiment}) of two simple problems called \emph{Stratified} and \emph{Odd Cycle} \cite{DBLP:journals/tplp/AlvianoP13,DBLP:conf/ecai/MushthofaSC14}, which both \textsc{fasp2smt} and \textsc{ffasp} solve in less than 1 second.

The main picture resulting from the experimental analysis is that \textsc{fasp2smt} is slower than \textsc{ffasp} in Graph Coloring, but it is faster in Hamiltonian Path.
The reason for these different behaviors can be explained by the fact that all tested instances of Graph Coloring are coherent, while incoherent instances are also present among those tested for Hamiltonian Path.
To confirm such an intuition, we tested the simple program $\{p \oplus q \leftarrow 1, 0 \leftarrow p \oplus q\}$.
Its incoherence is proved instantaneously by \textsc{fasp2smt}, while \textsc{ffasp} requires 71.8 seconds and 446 MB of memory (8.3 seconds and 96 MB of memory if the program is manually shifted).

\section{Conclusions}

SMT proved to be a reasonable target language to compute fuzzy answer sets efficiently.
In fact, when structural properties of the evaluated programs are taken into account, efficiently evaluable theories are produced by \textsc{fasp2smt}.
This is the case for acyclic programs, for which completion can be used, as well as for HCF programs with only $\oplus$ in rule heads and no recursive $\oplus$ in rule bodies, for which ordered completion is proposed.
Moreover, common patterns to \emph{crispify} atoms, which would introduce recursive $\oplus$ in rule bodies, are possibly replaced by integrality constraints.
The performance of \textsc{fasp2smt} was compared with \textsc{ffasp}, which performs multiple calls to an ASP solver.
An advantage of \textsc{fasp2smt} is that, contrary to \textsc{ffasp}, its performance is not affected by the approximation used to represent truth degrees in the input program.
On the other hand, \textsc{ffasp} is currently faster than \textsc{fasp2smt} for instances having a stable model with truth degrees in $\Q_k$, for some small $k$, which however cannot be determined a priori.
Such a $k$ does not exist for incoherent instances, and indeed in this case \textsc{fasp2smt} significantly overcomes \textsc{ffasp}.
It is also important to note that in general the amount of memory required by \textsc{fasp2smt} is negligible compared to \textsc{ffasp}.
Future work will evaluate the possibility to extend the approximation operators by \citeN{DBLP:journals/tplp/AlvianoP13} to the broader language considered in this paper, with the aim of identifing classes of programs for which the fixpoints are reached within a linear number of applications.

\section*{Acknowledgement}
Mario Alviano was partially supported by MIUR within project ``SI-LAB BA2\-KNOW  -- Business Analitycs to Know'', by Regione Calabria, POR Calabria FESR 2007-2013, within projects ``ITravel PLUS'' and ``KnowRex'', by the National Group for Scientific Computation (GNCS-INDAM), and by Finanziamento Giovani Ricercatori UNICAL.
Rafael Pe\~naloza was partially supported by the DFG within the Cluster of Excellence `cfAED;' this work was 
developed while still being affiliated with TU Dresden and the Center for Advancing Electronics Dresden, Germany.

\bibliographystyle{acmtrans}
\bibliography{bibtex}

\label{lastpage}

\clearpage
\appendix

\section{Proofs}

\PropSimp*
\begin{proof}
Since each rule is rewritten independently, we can prove $\Pi \equiv_{\At(\Pi)} (\Pi \setminus \{r\}) \cup \simp(\{r\})$, where $r$ is some rule in $\Pi$.
We use structural induction on $r$.
The base case, i.e., $r$ is of the form $\alpha \leftarrow \beta$ with $\alpha \in \B$ and $\beta \in \B$, is trivial because $\simp(\{\alpha \leftarrow \beta\}) = \{\alpha \leftarrow \beta\}$.
Now, consider $r$ of the form $\alpha \leftarrow \naf\beta$.
We have to show $\Pi \equiv_{\At(\Pi)} \Pi'$, where $\Pi' := (\Pi \setminus \{r\}) \cup \{\alpha \leftarrow \naf p,$ $p \leftarrow \beta\}$.
For $I \in \SM(\Pi)$, define $I'$ such that $I'(p) := I(\beta)$, and $I'(q) := I(q)$ for all $q \in \At(\Pi)$.
We have that $I' \in \SM(\Pi')$.
Moreover, for any $J \in \SM(\Pi')$ it holds that $J(p) = J(\beta)$ because the only head occurrence of $p$ in $\Pi'$ is in $p \leftarrow \beta$.
It turns out that $J \cap \At(\Pi)$ belongs to $\SM(\Pi)$.
The remaining cases are given in \cite{DBLP:conf/ecai/MushthofaSC14}.
\end{proof}

\ThmHard*
\begin{proof}
We start by giving the common properties that will be used to prove each part of the theorem.
We reduce the satisfiability problem for 2-QBF$_\exists$ formulas to FASP coherence testing.
Let $\phi$ be  $\exists x_1,\ldots,x_m \forall x_{m+1},\ldots,x_n\ \bigvee_{i=1}^k L_{k,1} \wedge L_{k,2} \wedge L_{k,3}$, where $n > m \geq 1$, $k \geq 1$.
For each $\odot \in \{\veebar,\oplus,\otimes\}$, our aim is to build a FASP program $\Pi_\phi^\odot$ such that $\phi$ is satisfiable if and only if $\Pi_\phi^\odot$ is coherent.

In the construction of $\Pi_\phi^\odot$ we use the mapping $\sigma$ such that $\sigma(x_i) := x_i^T$, and $\sigma(\neg x_i) := x_i^F$, for all $i \in [1..n]$.
Moreover, $\Pi_\phi^\odot$ will have atoms $\mathit{sat}$, and $x_i^T,x_i^F$ for all $i \in [1..n]$, and its models will satisfy the following properties, for a fixed truth degree $d\in[0,1[$:
\begin{enumerate}[leftmargin=*]
\item $I \models \Pi_\phi^\odot$ implies $I(\mathit{sat}) = 1$;
\item $I \models \Pi_\phi^\odot$ implies either $I(x_i^T) = 1 \wedge I(x_i^F) = d$, or $I(x_i^F) = 1 \wedge  I(x_i^T) = d$, for all $i \in [1..n]$;
\item $I \models \Pi_\phi^\odot$ and $I(\mathit{sat}) = 1$ implies $I(x_i^T) = I(x_i^F) = 1$, for all $i \in [m+1..n]$;
\item $J \subset I$ and $J \models (\Pi_\phi^\odot)^I$ implies $J(\mathit{sat}) = d$ and either $I(x_i^T) = 1 \wedge I(x_i^F) = d$, or $I(x_i^F) = 1 \wedge I(x_i^T) = d$, for all $i \in [1..n]$.
\end{enumerate}

We will then define a mapping between assignments for $x_1,\ldots,x_m$ and interpretations of $\Pi_\phi^\odot$.
Let $\nu$ be a Boolean assignment for $x_1,\ldots,x_m$.
Define $I_\nu^d$ to be the interpretation such that:
$I_\nu^d(x_i^T)$ equals 1 if $\nu(x_i) = 1$, and $d$ otherwise, for all $i \in [1..m]$;
$I_\nu^d(x_i^F)$ equals 1 if $\nu(x_i) = 0$, and $d$ otherwise, for all $i \in [1..m]$;
$I_\nu(x_i) = 1$ for all $i \in [m+1..n]$; and
$I_\nu(\mathit{sat}) = 1$.
Moreover, for an extended Boolean assignment for $x_1,\ldots,x_n$, we define $I_{\nu'}$ to be the interpretation such that:
$I_{\nu'}^d(x_i^T)$ equals 1 if $\nu'(x_i) = 1$, and $d$ otherwise, for all $i \in [1..n]$;
$I_{\nu'}^d(x_i^F)$ equals 1 if $\nu'(x_i) = 0$, and $d$ otherwise, for all $i \in [1..n]$; and
$I_{\nu'}(\mathit{sat}) = d$.
These mappings will allow us to define one-to-one mappings between satisfying assignments of $\phi$ and stable models of $\Pi_\phi^\odot$, and between unsatisfying assignments of $\phi$ and minimal models of reducts (counter models of $\Pi_\phi^\odot$).

\paragraph{Proof of (ii).} 
We adapt the construction by \cite{DBLP:journals/amai/EiterG95}.
The program $\Phi_\phi$ is the following:
\begin{eqnarray}
    x_i^T \veebar x_i^F \leftarrow 1 && \forall i \in [1..n] \label{eq:hard:1:apx} \\
    x_i^T \leftarrow \mathit{sat} \quad x_i^F \leftarrow \mathit{sat} \quad \mathit{0} \leftarrow \naf \mathit{sat} && \forall i \in [m+1..n] \label{eq:hard:3:apx} \\
    \mathit{sat} \leftarrow \sigma(L_{k,1}) \barwedge \sigma(L_{k,2}) \barwedge \sigma(L_{k,3}) && \forall i \in [1..k] \label{eq:hard:4:apx}
\end{eqnarray}
The program $\Pi_\phi^\veebar$ has the four properties given above for $d = 0$.
Any model of $\Pi_\phi^\veebar$ is of the form $I_\nu^0$, for some assignment $\nu$ for $x_1,\ldots,x_m$.
If we consider the reduct $(\Pi_\phi^\veebar)^{I_\nu^0}$, the rule $0 \leftarrow \naf\mathit{sat}$ is replaced by $0 \leftarrow 0$.
Any minimal model strictly contained in $I_\nu$ will be of the form $J_{\nu'}$ for some assignment $\nu'$ extending $\nu$.
Such a $J_{\nu'}$ would imply that $\nu'(\psi) = 0$, and therefore $\nu(\phi) = 0$.
On the other hand, if such a $J_{\nu'}$ does not exist, it means that $\mathit{sat}$ is necessarily 1;
iff there is $i \in [1..k]$ such that $\sigma(L_{k,1}) \barwedge \sigma(L_{k,2}) \barwedge \sigma(L_{k,3})$ is necessarily 1;
iff all $\nu'$ extending $\nu$ are such that $\nu'(\psi) = 1$;
iff $\nu(\psi) = 1$.
Hence, we have that $\phi$ is satisfiable iff $\Pi_\phi^\veebar$ is coherent.

To complete this part of the proof, it is enough to replace (\ref{eq:hard:4:apx}) by
\begin{eqnarray}
    \mathit{sat} \leftarrow \sigma(L_{k,1}) \otimes \sigma(L_{k,2}) \otimes \sigma(L_{k,3}) && \forall i \in [1..k] \label{eq:hard:alt}
\end{eqnarray}
because any model and counter model of $\Pi_\phi^\veebar$ give a Boolean interpretation to $\sigma(L_{k,1}) \otimes \sigma(L_{k,2}) \otimes \sigma(L_{k,3})$.

\paragraph{Proof of (iii).}
This is essentially folklore. Having $\oplus$ in rule bodies allows to crispify a variable $p$ by means of the common pattern $p \leftarrow p \oplus p$.
The program $\Pi_\phi^\oplus$ is thus
\begin{eqnarray}
    x_i^T \oplus x_i^F \leftarrow 1 \quad x_i^T \leftarrow x_i^T \oplus x_i^T \quad x_i^F \leftarrow x_i^F \oplus x_i^F && \forall i \in [1..n]  \\
    x_i^T \leftarrow \mathit{sat} \quad x_i^F \leftarrow \mathit{sat} \quad 0 \leftarrow \naf \mathit{sat} \quad \mathit{sat} \leftarrow \mathit{sat} \oplus \mathit{sat} && \forall i \in [m+1..n]  \\
    \mathit{sat} \leftarrow \sigma(L_{k,1}) \barwedge \sigma(L_{k,2}) \barwedge \sigma(L_{k,3}) && \forall i \in [1..k] \label{eq:hard:iii}
\end{eqnarray}
The same argument used for (ii) proves that $\phi$ is satisfiable iff $\Pi_\phi^\oplus$ is coherent.
The same holds if (\ref{eq:hard:iii}) is replaced by (\ref{eq:hard:alt}).

\paragraph{Proof of (i).}
This is the most sophisticated construction.
The program $\Pi_\phi^\otimes$ is 
\begin{eqnarray}
    x_i^T \oplus x_i^F \leftarrow 0.5 && \forall i \in [1..n]  \\
    x_i^T \otimes x_i^T \otimes x_i^T \leftarrow x_i^T \otimes x_i^T && \forall i \in [1..n] \\
    x_i^F \otimes x_i^F \otimes x_i^F \leftarrow x_i^F \otimes x_i^F && \forall i \in [1..n] \\
    x_i^T \leftarrow \mathit{sat} \quad x_i^F \leftarrow \mathit{sat} \quad 0 \leftarrow \naf \mathit{sat} \quad \mathit{sat} \leftarrow 0.5 && \forall i \in [m+1..n]  \\
    \mathit{sat} \leftarrow \sigma(L_{k,1}) \otimes \sigma(L_{k,2}) \otimes \sigma(L_{k,3}) && \forall i \in [1..k] 
\end{eqnarray}
This program $\Pi_\phi^\otimes$ has the four properties given at the beginning of this proof, but for $d = 0.5$.
(Note that rule $\mathit{sat} \leftarrow 0.5$ was added to have a uniform proof with the previous parts, but the construction would work also without such a rule.)
In fact, all atoms must be assigned a truth degree of 0.5 or 1.
Hence, the interpretation of $\sigma(L_{k,1}) \otimes \sigma(L_{k,2}) \otimes \sigma(L_{k,3})$ will be 1 if $\sigma(L_{k,1}),\sigma(L_{k,2}),\sigma(L_{k,3})$ are 1, and less than or equal to 0.5 otherwise.
We can thus rely on the argument given in the proof of~(ii).
\end{proof}

\ThmShift*
\begin{proof}
Since the shift is performed independently on each rule of $\Pi$, it suffices to show $\Pi'' \cup \{p_1 \odot \cdots \odot p_n \leftarrow \beta\} \equiv_{\At(\Pi)} \Pi'' \cup \shift(\{p_1 \odot \cdots \odot p_n \leftarrow \beta\})$, where $\Pi'' \cup \{p_1 \odot \cdots \odot p_n \leftarrow \beta\}=\Pi$, $n \geq 2$, and $\odot \in \{\oplus,\otimes,\veebar\}$.
To simplify the presentation, $\beta$ is assumed to be a propositional atom.
Moreover, since $\Pi$ is HCF, w.l.o.g. we can assume that, for $1 \leq i < j \leq n$, $p_i$ does not reach $p_j$ in $\G_\Pi$.
In each part of the proof, we will provide a one-to-one mapping between the (minimal) models of the original program and the models of shifted program.
Moreover, we will give a mapping of the counter model of the original program into the counter models of the shifted program, and \emph{vice versa}.

\paragraph{Proof for $\oplus$.}
$I \models \Pi'' \cup \{p_1 \oplus \cdots \oplus p_n \leftarrow \beta\}$ iff $I \models \Pi'' \cup \shift(\{p_1 \oplus \cdots \oplus p_n \leftarrow \beta\})$ holds because
$I(p_1) + \cdots + I(p_n) \geq I(\beta)$ iff
$$I(p_i) \geq I(\beta) + \sum_{j \in [1..n], j \neq i} (1 - I(p_j)) - (n-1) = I(\beta) - \sum_{j \in [1..n], j \neq i} I(p_j)$$ 
for all $i \in [1..n]$.
Let $I$ be a model of the two programs.

For all $J \subset I$, it holds that $J \models (\Pi'')^I \cup \{p_1 \oplus \cdots \oplus p_n \leftarrow \beta\}^I$ implies that $J \models (\Pi'')^I \cup \shift(\{p_1 \oplus \cdots \oplus p_n \leftarrow \beta\})^I$ because 
$J(p_1) + \cdots + J(p_n) \geq J(\beta)$ iff
$$J(p_i) \geq J(\beta) + \sum_{j \in [1..n], j \neq i} (1 - J(p_j)) - (n-1) = J(\beta) - \sum_{j \in [1..n], j \neq i} J(p_j),$$ 
for all $i \in [1..n]$, which implies
$$J(p_i) \geq J(\beta) + \sum_{j \in [1..n], j \neq i} (1 - I(p_j)) - (n-1) = J(\beta) - \sum_{j \in [1..n], j \neq i} I(p_j)$$ 
because by assumption $J(p_j) \leq I(p_j)$ for all $p_j \in [1..n]$.

For the converse direction, we show that for any interpretation $J \subset I$ such that $J \models (\Pi'')^I \cup \shift(\{p_1 \oplus \cdots \oplus p_n \leftarrow \beta\})^I$, there is $K$ such that $J \subseteq K \subset I$ and $K \models (\Pi'')^I \cup \{p_1 \oplus \cdots \oplus p_n \leftarrow \beta\}^I$.
Let us assume that $\{p_1 \oplus \cdots \oplus p_n \leftarrow \beta\} \neq \emptyset$, and that $J(p_i) < I(p_i)$ for some $i \in [1..n]$, otherwise the proof is immediate.
We define the following non-deterministic sequence:
$K_0 := J$;
for $i \in [0..n-1]$, 
$K_{i+1}$ is any subset minimal model of $(\Pi'')^I$ such that 
$K_i \subseteq K_{i+1} \subset I$, and $K_{i+1} = \min(I(p_{n-i}), m)$, where $m = \max(K_i(p_{n-i}), K_i(\beta) - \sum_{j \in [1..n], j \neq i} K_i(p_j))$.
The sequence is well defined because in $K_{i+1}$ we are possibly increasing the truth degree of $p_{n-i}$, which cannot cause an increase of any $p_j$ with $j < n-i$ by assumption.
Intuitively, we possibly increase the truth degree of $p_1,\ldots,p_n$ in order to satisfy the original rule $p_1 \oplus \cdots \oplus p_n \leftarrow \beta$, and we do this by preferring atoms with higher indices.
Hence, we have $K_n \subset I$ and $K_n \models (\Pi'')^I \cup \{p_1 \oplus \cdots \oplus p_n \leftarrow \beta\}^I$.

\paragraph{Proof for $\otimes$.}
For an interpretation $I$, define $I'$ to be such that:
$I'(p) = I(p)$ for all $p \in \At(\Pi)$;
$I'(q)$ equals 1 if $I(\beta) > 0$, and 0 otherwise.
We follow the line of the previous proof.
Let $I$ be an interpretation such that $I(\beta) > 0$, otherwise the proof is immediate.
Then, $I(q) = 1$, and 
$I$ is a minimal model of $\Pi'' \cup \{p_1 \otimes \cdots \otimes p_n \leftarrow \beta\}$ if and only if
 $I'$ is a minimal model of $\Pi'' \cup \shift(\{p_1 \otimes \cdots \otimes p_n \leftarrow \beta\})$ holds because
$I(p_1) + \cdots + I(p_n) - (n-1) \geq I(\beta)$ iff
$I(p_i) \geq I(\beta) + \sum_{j \in [1..n], j \neq i} (1 - I(p_j))$,
for all $i \in [1..n]$.
Let $I$ be a minimal model of $\Pi$ with $I(\beta) > 0$.

For all $J \subset I$, we have that $J \models (\Pi'')^I \cup \{p_1 \otimes \cdots \otimes p_n \leftarrow \beta\}^I$ implies that $J' \models (\Pi'')^{I'} \cup \shift(\{p_1 \otimes \cdots \otimes p_n \leftarrow \beta\})^{I'}$ because 
$J(p_1) + \cdots + J(p_n) - (n-1) \geq J(\beta)$ iff
$J(p_i) \geq J(\beta) + \sum_{j \in [1..n], j \neq i} (1 - J(p_j))$, 
for all $i \in [1..n]$, which itself implies
$J(p_i) \geq J(\beta) + \sum_{j \in [1..n], j \neq i} (1 - I(p_j))$
since by assumption $J'(p_j) = J(p_j) \leq I(p_j)$ for all $p_j \in [1..n]$.

For the converse direction, we only change the non-deterministic sequence from the previous proof as follows:
$K_0 := J$;
for $i \in [0..n-1]$, 
$K_{i+1}$ is any subset minimal model of $(\Pi'')^I$ such that 
$K_i \subseteq K_{i+1} \subset I$, and $K_{i+1} = \min(I(p_{n-i}), m)$, where $m = \max(K_i(p_{n-i}), K_i(\beta) + \sum_{j \in [1..n], j \neq i} (1-K_i(p_j)))$.
We have $K_n \subset I'$.

\paragraph{Proof for $\veebar$.}
Given an interpretation $I$, define $I'$ to be such that:
$I'(p) = I(p)$ for every $p \in \At(\Pi)$;
$I'(q_n) = 1$; and
for $i \in [1..n-1]$, $I'(q_i)$ is equal to 1 if $I(p_i) > \max\{I(p_j) \mid j \in [i+1..n]\}$, and 0 otherwise.
Following the line of the previous two proofs,
$I$ is a minimal model of $\Pi'' \cup \{p_1 \veebar \cdots \veebar p_n \leftarrow \beta\}$ if and only if $I'$ is a minimal model of $\Pi'' \cup \shift(\{p_1 \veebar \cdots \veebar p_n \leftarrow \beta\})$. This holds because
$\max\{I(p_1), \ldots, I(p_n)\} \geq I(\beta)$ iff
$I(p_i) \geq I(\beta)$ for the index $i \in [1..n]$ such that
$I(p_i) \geq \max\{I(p_j) \mid j \in [1..i-1]\}$, and
either $I(p_i) > \max\{I(p_j) \mid j \in [i+1..n]\}$ or $i = n$.
Let $I$ be a minimal model of the program $\Pi$.

For $J \subset I$, define $J''$ to be such that:
$J''(p) = J(p)$ for every $p \in \At(\Pi)$; and 
$J''(q_i) = J'(q_i)$ for all $i \in [1..n]$.
Then $J \models (\Pi'')^I \cup \{p_1 \veebar \cdots \veebar p_n \leftarrow \beta\}^I$ implies $J'' \models (\Pi'')^{I'} \cup \shift(\{p_1 \veebar \cdots \veebar p_n \leftarrow \beta\})^{I'}$ since 
$\max\{J(p_1), \ldots, J(p_n)\} \geq J(\beta)$ iff
$J(p_i) \geq J(\beta)$ for the index $i \in [1..n]$ with
$J(p_i) \geq \max\{I(p_j) \mid j \in [1..i-1]\}$, and
either $i = n$ or $J(p_i) > \max\{J(p_j) \mid j \in [i+1..n]\}$. This holds
because by assumption $J''(p_j) = J(p_j) \leq I(p_j)$ for all $p_j \in [1..n]$.

As for the other direction, again, we only change the non-deterministic sequence as follows:
$K_0$ is such that $K_0(p) = J(p)$ for all $p \in \At(\Pi)$, and $K_0(q_i) = J'(q_i)$ for all $i \in [1..n]$;
for $i \in [0..n-1]$, 
$K_{i+1}$ is any subset minimal model of $(\Pi'')^I$ such that 
$K_i \subseteq K_{i+1} \subset I$, and $K_{i+1} = \min(I(p_{n-i}), m)$, where $m$ equals $K_i(\beta)$ if $\max_{j \in [1..n], j \neq n-i}{K_i(p_j)} < K_i(\beta)$, and $K_i(p_{n-1})$ otherwise.
We have $K_n \subset I'$.
\end{proof}

\ThmSmt*
\begin{proof}
We use structural induction to prove that $I(\alpha) = f(\alpha)^{\A_I}$ holds for any expression or term $\alpha$, and for $f \in \{\out,\inn\}$.
\begin{itemize}[leftmargin=*]
\item
The base cases are immediate:
for $c \in [0,1]$, $I(c) = c^{\A_I} = f(c)^{\A_I}$ by definition;
for $p \in \At(\Pi)$, $I(p) = p^{\A_I} = f(p)^{\A_I}$ by definition.

\item
For $\naf$, assuming that the claim holds for $\alpha$, we have
$$I(\naf\alpha) = 1 - I(\alpha) = 1 - \out(\alpha)^{\A_I} = f(\naf\alpha)^{\A_I}.$$

\item
For $\oplus$, assuming that the claim holds for $\alpha$ and $\beta$, we have
\begin{align*}
I(\alpha \oplus \beta) & {}= \min(I(\alpha) + I(\beta), 1) = \min(f(\alpha)^{\A_I} + f(\beta)^{\A_I}, 1) \\
 & {}= \ite(f(\alpha) + f(\beta) \leq 1, f(\alpha) + f(\beta), 1)^{\A_I} = f(\alpha \oplus \beta)^{\A_I}.
\end{align*}

\item
For $\otimes$, assuming that the claim holds for $\alpha$ and $\beta$, we have
\begin{align*}
I(\alpha \otimes \beta) & {}= \max(I(\alpha) + I(\beta) - 1, 0) = \max(f(\alpha)^{\A_I} + f(\beta)^{\A_I} - 1, 0) \\
 & {}= \ite(f(\alpha) + f(\beta) - 1 \geq 0, f(\alpha) + f(\beta) - 1, 0)^{\A_I} = f(\alpha \otimes \beta)^{\A_I}.
\end{align*}

\item
For $\veebar$, assuming that the claim holds for $\alpha$ and $\beta$, we have
\begin{align*}
I(\alpha \veebar \beta) & {}= \max(I(\alpha), I(\beta)) = \max(f(\alpha)^{\A_I}, f(\beta)^{\A_I}) \\
 & {}= \ite(f(\alpha) \geq f(\beta), f(\alpha), f(\beta))^{\A_I} = f(\alpha \veebar \beta)^{\A_I}.
\end{align*}

\item
For $\barwedge$, assuming that the claim holds for $\alpha$ and $\beta$, we have
\begin{align*}
I(\alpha \barwedge \beta) & {}= \min(I(\alpha), I(\beta)) = \min(f(\alpha)^{\A_I}, f(\beta)^{\A_I}) \\
 & {}= \ite(f(\alpha) \leq f(\beta), f(\alpha), f(\beta))^{\A_I} = f(\alpha \barwedge \beta)^{\A_I}.
\end{align*}
\end{itemize}
We can thus conclude that $I \models \Pi$ if and only if $\A_I$ is a $\Sigma$-model of the theory
$\{p \in [0,1] \mid p \in \At(\Pi)\} \cup \{\out(r) \mid r \in \Pi\}$.
Moreover, if $I \in \SM(\Pi)$ then there is no $J \subset I$ such that $J \models \Pi^I$, which is the case if and only if $\A_I$ also satisfies formula $\phi_\inn$.
\end{proof}

\ThmComp*
\begin{proof}
Let $\Pi'$ be $\shift(\simp(\Pi))$, and $\Pi'' = \bool^-(\Pi')$.
By Proposition~\ref{prop:simp} and Theorem~\ref{thm:shift}, we know that  $\Pi \equiv_{\At(\Pi)} \Pi'$.
Moreover, if $\Pi \setminus \bool(\Pi)$ is acyclic then $\Pi''$ is acyclic.
From the correctness of the completion proved by \citeN{DBLP:journals/tplp/JanssenVSC12}, and since
$\supp(p,\heads(\Pi''))^{\A_I} = \max\{\beta^{\A_I} \mid p \leftarrow \beta \in \Pi''\} = \max\{I(\beta) \mid p \leftarrow \beta \in \Pi''\}$ captures the notion of support of $p$,
we have that
$I \in \SM(\Pi'')$ iff $\A_I \models \comp(\Pi'')$.
Hence, the models of $\rcomp(\Pi)$ are the structures $\A_I^r$ such that $I \in \SM(\Pi'')$ satisfying the following condition:
$I(b_p)$ equals 1 if $I(p) > 0$, and 0 otherwise.
These are exactly the stable models of $\Pi'$, which concludes the proof.
\end{proof}

\LemRank*
\begin{proof}
We first prove the claim for programs without $\oplus$.
Let $J_0$ be the interpretation mapping everything to 0, and $J_{i+1}:=\T_{\Pi^I}(J_i)$, for all $i\geq 0$.
For every $i\geq 0$ and $p \in \At(\Pi)$, if $J_i(p) < J_{i+1}(p)$, then there is a rule $p \leftarrow \beta \in \Pi^I$ with $J_{i+1}(p) = J_i(\beta)$.
In this case, for each atom $q$ (including numeric constants) occurring $\beta$, we say that $p$ is \emph{inferred} by $q$.
In particular, since $\beta$ can only contain $\barwedge$ and $\veebar$, we have the following property:
($*$) $J_{i+1}(p) \leq J_i(q)$.
Let $n = |\At(\Pi)|$ be the number of atoms in $\Pi$.
We prove that any chain of inferred atoms has length at most $n+1$, which implies that $n$ applications of $\T_{\Pi^I}$ give the fixpoint of the operator.
Suppose on the contrary that there are $p_0, \ldots, p_{n+1}$ such that
$p_0$ is a numeric constant and 
$p_{i+1} \in \At(\Pi)$ is inferred by $p_i \in \At(\Pi)$, for all $i \in [0..n]$.
Since $n = |\At(\Pi)|$, there exist $1\le j < k \le n+1$ such that $p_j=p_k$.
Hence, from $J_i(p) < J_{i+1}(p)$ we have $J_{i+1}(p_{i+1}) > J_{i}(p_{i+1})$ for $i \in [0..n]$, and thus $J_k(p_k) > J_{k-1}(p_k) \geq J_j(p_k)$ (where the last inequality is due to the monotonicity of $\T_{\Pi^I}$).
From ($*$) we have $J_{i+1}(p_{i+1}) \leq J_i(p_i)$ for $i \in [0..n]$, and thus $J_k(p_k) \leq J_j(p_j) = J_j(p_{k})$.
Therefore, we have $J_k(p_k) > J_j(p_k)$ and $J_k(p_k) \leq J_j(p_{k})$, that is, a contradiction.

Let us now add non-recursive $\oplus$ in rule bodies.
If there is $i \in [0..n]$ such that $p_{i+1}$ and $p_i$ do not satisfy ($*$), i.e., $J_{i+1}(p_{+1}) > J_i(p_i)$, then $\beta$ must contain some occurrence of $\oplus$.
Since $\oplus$ is non-recursive by assumption, $\{p_j \mid i \in [1..i]\}$ and $\{p_j \mid [i+1..n+1]\}$ are disjoint sets.
Either $p_1,\ldots,p_i$ or $p_{i+1},\ldots,p_{n+1}$ must have a repeated atom, and argument used before gives a contradiction.
\end{proof}

\ThmOcomp*
\begin{proof}
Let $\Pi'$ be $\shift(\simp(\Pi))$.
From Proposition~\ref{prop:simp} and Theorem~\ref{thm:shift} we have $\Pi \equiv_{\At(\Pi)} \Pi'$.
Moreover, $\Pi'$ has atomic heads and non-recursive $\oplus$ in rule bodies.
We show that stable models of $\Pi'$ and $\Sigma$-models of $\ocomp(\Pi')$ are related.

First, notice that for any structure \A and set of atoms $A$, $\rank(A)^\A$ equals $\max\{r_p^\A \mid p \in A\}$ if $A \neq \emptyset$, and 0 otherwise.
Moreover, $\osupp(p, \heads(p,\Pi'))^\A=1$ if there is $p \leftarrow \beta \in \heads(p,\Pi)$ such that $p^\A = \beta^\A$ and $r_p^\A = 1 + \rank(\pos(\beta))^\A$.

\paragraph{$(\Rightarrow)$}
Let $I \in \SM(\Pi')$.
Let $J_0$ be the interpretation mapping everything to 0, and $J_{i+1}$ be $\T_{\Pi'^I}(J_i)$, for $i\geq 0$.
By Lemma~\ref{lem:rank}, $J_{n+1} = J_n$.
Let $r$ be the ranking associated with $I$, i.e., $r(p)$ equals the minimum index $i \in [1..n]$ such that $J_i(p) = J_n(p)$.

We now use induction on the rank of inferred atoms to prove the following: $\A_I^o \models p = \out(\beta) \wedge r_p = 1 + \rank(\pos(\beta)))$.
For all $p \in \At(\Pi)$ such that $J_n(p) > 0$ and $r(p) = 1$, there is a rule $p \leftarrow \beta \in \Pi'^I$ such that $J_n(\beta) = J_n(p)$ and $\beta$ only contains numeric constants;
in this case $\A_I^o \models p = \out(\beta) \wedge r_p = 1 + \rank(\pos(\beta)))$.
For $m \in [1..n-1]$, and for all $p \in \At(\Pi)$ such that $J_n(p) > 0$ and $r(p) = m+1$, there is a rule $p \leftarrow \beta \in \Pi'^I$ such that $J_n(\beta) = J_n(p)$ and $q \in \pos(\beta)$ implies $r(q) \leq m$;
since the claim is true for all $q \in \pos(\beta)$, and at least one of them must satisfy $r(q) = m$, we have $\A_I^o \models p = \out(\beta) \wedge r_p = 1 + \rank(\pos(\beta)))$.

That $\A_I^o \models \comp(\Pi')$ follows by the fact that the completion captures the notion of supported model.
Hence, $\A_I^o \models \ocomp(\Pi')$.

\paragraph{$(\Leftarrow)$}
Let $\A$ be a $\Sigma$-model of $\ocomp(\Pi')$, and let $I := I_\A$.
We shall show that $I_\A \in \SM(\Pi')$.
Let $J_0$ be the interpretation mapping everything to 0, and $J_{i+1}$ be $\T_{\Pi'^I}(J_i)$, for $i\geq 0$.

We use induction on $r_p^\A$ to show that $J_{r_p^\A}(p) = I(p)$.
If $p^\A > 0$ and $r_p^\A = 1$, then there is $p \leftarrow \beta \in \Pi'$ such that $p^\A = \beta^\A$ and $\pos(\beta) = \emptyset$;
in this case $J_1(p) = I(p)$.
If $p^\A > 0$ and $r_p^\A = m+1$ for some $m \in [1..n-1]$, there is $p \leftarrow \beta \in \Pi'$ such that $p^\A = \beta^\A$ and $\max\{r_q^\A \mid q \in \pos(\beta)\} = m$;
since $J_m(q) = I(q)$ for all $q \in \pos(\beta)$ by the induction hypothesis, we have $J_{m+1}(p) = I(\beta) = I(p)$.

The proof is thus complete.
\end{proof}

\end{document}